\documentclass{article} % For LaTeX2e
\usepackage[preprint,nonanonymous]{neurips_2026}
\usepackage{hyperref}
\usepackage{url}

\usepackage{natbib}
\usepackage{url}
\usepackage[utf8]{inputenc} % allow utf-8 input
\usepackage[T1]{fontenc}    % use 8-bit T1 fonts
\usepackage{booktabs}       % professional-quality tables
\usepackage{amsfonts}       % blackboard math symbols
\usepackage{nicefrac}       % compact symbols for 1/2, etc.
\usepackage{microtype}      % microtypography
\usepackage{amsthm}
\usepackage{mathtools}
\usepackage{amssymb}
\usepackage{todonotes}

\usepackage[shortlabels]{enumitem}
\usepackage{xcolor}
\usepackage{comment}
\usepackage{subcaption}
\usepackage{graphicx}
\usepackage{caption}
\usepackage{makecell}
\usepackage{multirow}
\usepackage{float}
\usepackage{booktabs,caption}
\usepackage[flushleft]{threeparttable}
\usepackage{algorithmic}
\usepackage{algorithm}
\usepackage{xspace}

\usepackage{todonotes}

\usepackage{booktabs}
\usepackage{multirow}
\usepackage{graphicx}

\usepackage[table]{xcolor}
\usepackage{pgf}

\newcommand{\utilitycell}[3]{%
  \pgfmathparse{abs(#3-#2) < 0.00001 ? 1 : max(min((#1-#2)/(#3-#2),1),0)}%
  \let\ratio\pgfmathresult
  \pgfmathparse{0.96 - 0.18*\ratio} \let\red\pgfmathresult
  \pgfmathparse{0.78 + 0.20*\ratio} \let\green\pgfmathresult
  \pgfmathparse{0.78}               \let\blue\pgfmathresult
  \edef\temp{\noexpand\cellcolor[rgb]{\red,\green,\blue}}%
  \temp #1%
}

\newcommand{\asrcell}[3]{%
  \pgfmathparse{abs(#3-#2) < 0.00001 ? 1 : max(min((#3-#1)/(#3-#2),1),0)}%
  \let\ratio\pgfmathresult
  \pgfmathparse{0.96 - 0.18*\ratio} \let\red\pgfmathresult
  \pgfmathparse{0.78 + 0.20*\ratio} \let\green\pgfmathresult
  \pgfmathparse{0.78}               \let\blue\pgfmathresult
  \edef\temp{\noexpand\cellcolor[rgb]{\red,\green,\blue}}%
  \temp #1%
}

\newcommand{\sys}{\textsc{Warden}\xspace}

\newtheorem{theorem}{Theorem}

\numberwithin{theorem}{section}

\title{Information Theoretic Adversarial Training of Large Language Models}

\author{
Yiwei Zhang \\
Purdue University \\
West Lafayette, IN 47907, USA\\
\texttt{yiweizhang@purdue.edu} 
\And
Jeremiah Birrell \\
Texas State University \\
San Marcos, TX 78666, USA \\
\texttt{jbirrell@txstate.edu} \\
\And
Reza Ebrahimi \\
University of South Florida \\
Tampa, FL  33620, USA\\
\texttt{ebrahimim@usf.edu} 
\And
Rouzbeh Behnia\\
University of South Florida \\
Tampa, FL  33620, USA\\
\texttt{behnia@usf.edu} 
\And
Jason Pacheco \\
University of Arizona \\
Tucson, AZ, USA\\
\texttt{pachecoj@cs.arizona.edu} 
\And
Elisa Bertino \\
Purdue University \\
West Lafayette, IN 47907, USA\\
\texttt{bertino@purdue.edu} 
}

\begin{document}
\maketitle

\begin{abstract}
Large language models (LLMs) remain vulnerable to adversarial prompting despite advances in alignment and safety, often exhibiting harmful behaviors under novel attack strategies. While adversarial training can improve robustness, existing approaches are computationally expensive and difficult to scale. Recent continuous adversarial training methods, such as Continuous adversarial training (CAT) and Continuous Adversarial Preference Optimization (CAPO), address this challenge by leveraging gradient-based perturbations in the embedding space, enabling more efficient and expressive attacks. Building on this paradigm, we propose \sys, a distributionally robust adversarial training framework for LLMs that dynamically reweights adversarial examples through an $f$-divergence ambiguity set around the empirical training distribution. Our method optimizes the worst-case adversarial loss within a divergence ball around the empirical data distribution, automatically emphasizing harder adversarial examples. Using the convex dual formulation, the objective reduces to a log-sum-exp form under the KL divergence, with a dynamical parameter controlling the strength of reweighting. This study leads to a new class of information-theoretic objectives that significantly reduce attack success rates while maintaining model utility. Across multiple LLMs and attack settings, \sys substantially reduces attack success rates with computational and utility costs comparable to CAT-, CAPO-, and MixAT-based baselines, making it a practical approach for scalable robust alignment.
% Empirically, our method achieves robustness improvements with computational cost comparable to CAT and CAPO, making it a practical approach for scalable adversarial training of LLMs.
\end{abstract}

\section{Introduction}\label{sec:intro}
Despite recent advances in alignment and safety, large language models (LLMs) remain vulnerable to adversarial prompting and can exhibit harmful or unintended behaviors under novel attack strategies. Recent work highlights that standard training and alignment procedures often fail to anticipate such failure modes, leaving models exposed to unforeseen adversarial inputs \citep{anil2024many_redteaming}. Moreover, empirical evidence shows that harmful behaviors can persist even after safety fine-tuning, and that adversarial training is necessary to meaningfully improve robustness \citep{sheshadri2025latent_latent}.

While effective, adversarial training of LLMs tends to be computationally expensive and prohibitive. Continuous adversarial training methods, such as Continuous Adversarial Training (CAT), Continuous Adversarial Preference Optimization (CAPO), have revealed a promising direction for improving the robustness of LLMs by replacing discrete token-level perturbations with continuous, gradient-based adversarial attacks that are efficient for LLMs \citep{xhonneux2024efficient_CAT}. 
This continuous relaxation enables 
% By operating in the embedding space, CAT and CAPO enable 
scalable adversarial training that better aligns with the optimization dynamics of modern LLMs. 
Subsequent work, such as MixAT, further improves this paradigm by combining continuous and discrete adversarial training~\citep{dekany2025mixat}. 
Nevertheless, existing adversarial training objectives still typically aggregate per-sample adversarial losses uniformly, which can underemphasize rare but high-loss adversarial examples.
% However, despite these advances, there is still room for improvement in continuous adversarial training. 

%%Recent work has shown that the behavior of CAT can be better understood through the lens of in-context learning, revealing both its strengths and its inherent constraints in capturing complex adversarial distributions and generalizing across tasks \cite{CAT_ICLR_26}. 

Building on the work of \citep{xhonneux2024efficient_CAT}, we introduce \sys{} (\textbf{W}orst-case \textbf{A}dversarial \textbf{R}eweighting via $f$-\textbf{D}iverg\textbf{EN}ces), a distributionally robust optimization (DRO) framework for adversarial training of LLMs based on $f$-divergence reweighting (as illustrated in Figure~\ref{fig:method_workflow}).
% a Distributionally Robust Optimization (DRO)–based $f$-divergence reweighting method for adversarial training of LLMs. 
% The core idea is to optimize the worst-case expected adversarial loss within a divergence ball around the empirical training distribution, which automatically assigns larger weights to harder adversarial samples. 
Rather than minimizing the empirical average adversarial loss, \sys minimizes the worst-case expected adversarial loss over distributions within a divergence ball around the empirical training distribution. 
Using the convex dual formulation, the objective becomes a log-sum-exp form under KL divergence, and the dual variable $\lambda$ controls the strength of reweighting. We explore two strategies for $\lambda$: 
% Fixed: $\lambda$ is set to a constant value. 
Learnable: $\lambda$ is treated as a trainable parameter. Optimized: $\lambda$ is computed at each iteration using the bisection method derived from the DRO dual. Our work makes several major contributions to adversarial training of LLMs:  
% {\bf \color{blue} JB: I don't think we want to talk about the fixed method outside of the ablation section, because it doesn't work well and it makes the reweighting mechansim  seem more trivial.}

\begin{itemize}
    % \item We propose a new class of information-theoretic based objectives that significantly reduce attack success rate while increasing utility of LLMs.
    \item We propose \sys, a modular DRO-based reweighting framework for continuous adversarial training of LLMs, yielding a class of information-theoretic objectives that emphasize high-loss adversarial examples.
    \item We derive a tractable KL-DRO objective with a log-sum-exp form and study practical strategies for controlling the reweighting strength through the dynamical dual variable $\lambda$.
    \item Empirically, \sys reduces attack success rates across multiple LLMs and attack settings while maintaining utility and computational costs comparable to strong continuous adversarial training baselines.
    % \item The empirical computational complexity of our proposed class of objectives is on par with that of CAT and CAPO proposed in \cite{xhonneux2024efficient_CAT}.
\end{itemize}

% The source code is included in the Supplementary Material.

\begin{figure*}[t]
    \centering
    \includegraphics[width=\textwidth]{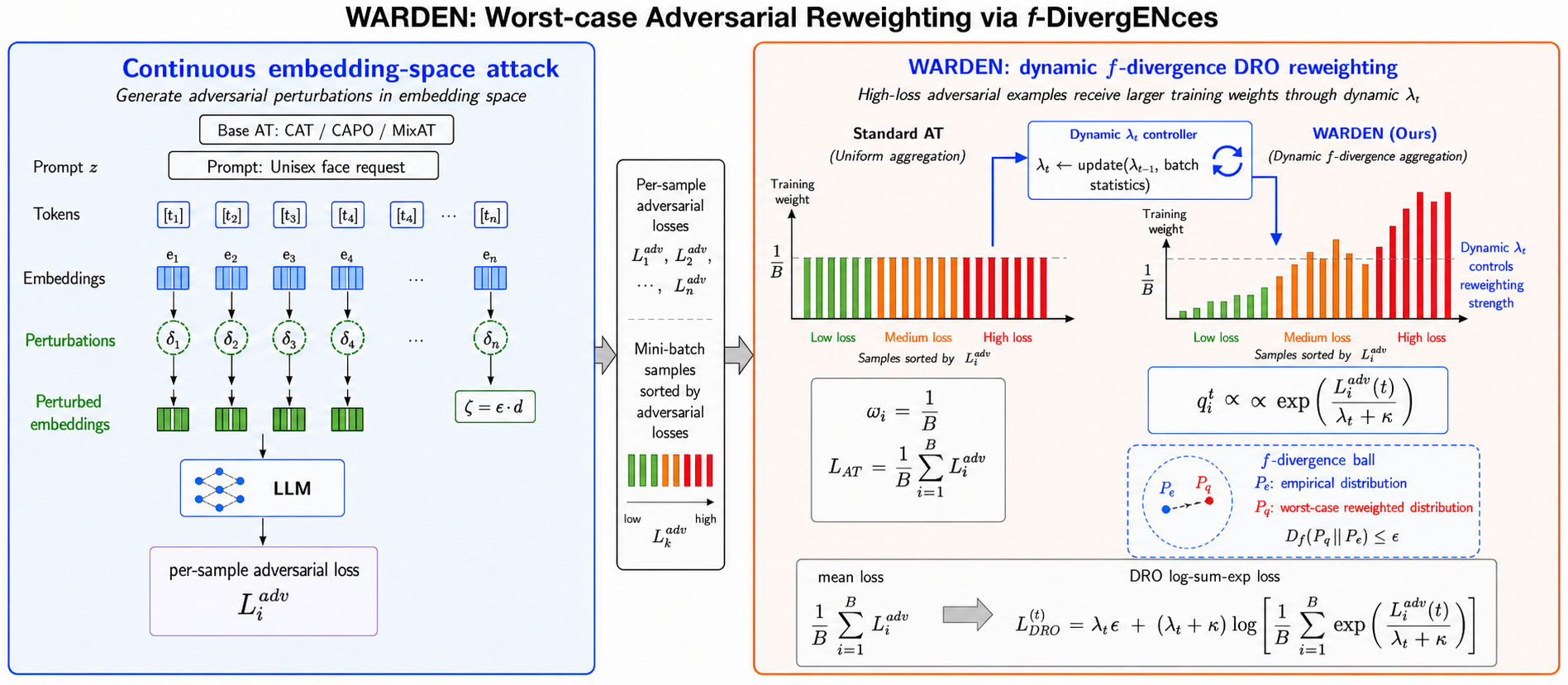}
    \caption{
    Overview of \sys{} with adaptive DRO reweighting.
    \textbf{Left:} A base continuous adversarial training method generates embedding-space perturbations and per-sample adversarial losses.
    \textbf{Right:} \sys{} replaces uniform aggregation with an $f$-divergence DRO objective that dynamically upweights high-loss adversarial examples via a learnable or optimized dual variable $\lambda_t$.
    For KL divergence, the dual objective reduces to a log-sum-exp form while leaving the underlying continuous attack pipeline unchanged.
    }
    \label{fig:method_workflow}
\end{figure*}

\section{Related Work}\label{sec:related}

%%\todo[inline]{Needs further development and streamlining}

Closely relevant work to this study can be categorized into three major streams.

\textit{Continuous Adversarial Training in the LLM's embedding space.} Adversarial training has been known as a leading approach for improving robustness of AI models; however, for LLMs, it is often impractical due to the high computational cost of generating discrete attacks during training. \cite{xhonneux2024efficient_CAT} addressed this limitation by proposing CAPO, an adversarial variant of Identity Preference Optimization (IPO) \citep{azar2024general_IPO} and the CAT algorithm that perform adversarial attacks in the LLM's continuous embedding space, making the process significantly more efficient. While CAT requires a utility dataset to prevent collapsing to degenerate states that refuse safe prompts, CAPO does not rely on any utility dataset. 
Highlighting the effectiveness of continuous adversarial training of LLMs,  \citep{fu2025shortlength_ICDL2025_Orthogonal} reveal theoretical and practical evidence that adversarial training with adversarial suffixes of a certain length yields defending against attacks with quadratically larger lengths. In a subsequent work, \citet{dekany2025mixat} propose MixAT to further improve the performance of continuous adversarial training by incorporating some discrete adversarial attacks in the token space while maintaining a runtime that is comparable to CAPO and CAT.
More recently, \citet{CAT_ICLR_26} offers a theoretical analysis from the in-context learning lens that sheds light on why adversarial perturbations in the embedding space help LLMs defend against jailbreak prompts in the original input token space. They also show that the behavior of continuous adversarial training can be improved by adding a regularizer inspired by in-context learning.
Our work is closest to this line, but differs in focus: rather than modifying the attack generation process, \sys changes how per-sample adversarial losses are aggregated by replacing uniform averaging with distributionally robust reweighting.

\textit{Adversarial training objectives and reweighting.}
A broader body of work studies the generalization, stability, and robustness--utility trade-offs of adversarial training \citep{wang2024benign,xiao2024uniformly,li2025adversarial,altinisik2025explaining,xie2024high,zhang2024stability}, while other methods improve adversarial training through objective or optimization design, such as perturbation reweighting, self-distillation, robust distillation, critical-layer fine-tuning, and multi-norm robustness objectives \citep{zhang2024improving,cho2025long,lee2025indirect,gopal2025boosting,jiang2024ramp}. Most related to \sys are methods that treat training signals non-uniformly, including pixel-level perturbation reweighting, long-tailed adversarial training, and vulnerable-data-aware adversarial training \citep{zhang2024improving,cho2025long,feng2025vulnerable}. \sys shares the intuition that difficult adversarial examples should receive greater emphasis, but realizes this through an $f$-divergence DRO objective for LLM adversarial training, yielding a lightweight loss-aggregation layer that leaves the continuous attack pipeline unchanged.

\textit{Orthogonal LLM defenses.}
Other LLM defense mechanisms are complementary to our approach. Latent adversarial training perturbs internal model states to mitigate harmful behaviors without relying on explicit failure-inducing prompts \citep{sheshadri2025latent_latent}. Shadow-LLM defenses instead use an auxiliary LLM to protect a target model at inference time \citep{wang2025selfdefend_Usenix_Orthogonal}. In contrast, \sys is a training-time method that directly improves the robustness of the target LLM and can potentially be combined with these orthogonal defenses.

% While this stream is the closest to this study, for completeness, we provide a non-exhaustive list of two other orthogonal but related streams.

% \textit{Latent Adversarial Training.} Another orthogonal stream of work is latent adversarial training which focuses on curbing against failure modes without examples that elicit them \citep{sheshadri2025latent_latent}. To this end, the adversarial perturbations are applied to the model’s latent state instead of its inputs. This approach decouples the adversarial training from the input samples and can be used in conjunction with any continuous adversarial training in the embedding space, which is out of the scope of our inquiry in this study.

% \textit{Shadow LLMs.} There has been another orthogonal recent stream of work that use a shadow LLM as a defense instance to concurrently protect a target LLM instance \citep{wang2025selfdefend_Usenix_Orthogonal}.

% \section{Proposed Method: Information Theoretic Adversarial Training of Large Language Models}
\section{\sys: Distributionally Robust Adversarial Training of LLMs with Worst-Case Adversarial Reweighting}\label{sec:sys}

Continuous adversarial training methods improve LLM robustness by generating adversarial perturbations in the embedding space and optimizing the model against the resulting adversarial losses. However, existing objectives typically aggregate per-sample adversarial losses uniformly. This average-case aggregation can underemphasize rare but high-loss adversarial examples, even though such examples often correspond to the most informative failure modes for robust alignment. We address this limitation by replacing uniform aggregation with a distributionally robust reweighting objective that emphasizes hard adversarial samples while preserving the underlying continuous attack pipeline.

\subsection{Worst-Case Adversarial Reweighting}

Continuous adversarial training for LLMs generates embedding-space perturbations and trains the model to prefer desired responses over undesired ones under these perturbations. Given a distribution $(x,y,\hat{y})\sim P_n$  of harmful prompts $x$, desired responses $y$, and undesired responses $\hat{y}$, and (continuous-attack)   adversarial perturbation $\delta(x,\hat{y})$ targeted at $\hat{y}$, the standard preference-based objective~\citep{xhonneux2024efficient_CAT} minimizes
% To maximize the likelihood of a desired response while decreasing the likelihood of an undesired response, the continuous adversarial training approach of  \cite{xhonneux2024efficient_CAT} is given by
\begin{align}\label{eq:Xhonneux_original}
    &\inf_\theta E_{(x,y,\hat{y})\sim P_n}\left[\mathcal{L}_\theta(x,y,\hat{y})\right]\,,
\end{align}
where
\begin{align}
    &\mathcal{L}_\theta(x,y,\hat{y})\coloneqq -\ell_\beta\left(\log\frac{f_\theta(y|x+\delta(x,\hat{y}))}{f_{\theta_0}(y|x)}-\log\frac{f_\theta(\hat{y}|x+\delta(x,\hat{y}))}{f_{\theta_0}(\hat{y}|x)}\right)\,,\,\,
    \ell_\beta(h)\coloneqq\left(h-\frac{1}{2\beta}\right)^2\notag
\end{align}
and 
$f_{\theta_0}$ 
denotes the original model.
% denotes the reference model, $P_n$ is the empirical distribution over triples $(x,y,\hat{y})$, and $\delta(x,\hat{y})$ denotes the adversarial perturbation of $x$ targeted at the unsafe response $\hat{y}$, obtained by approximately solving the inner continuous attack problem.
To prevent the LLM from collapsing to degenerate behaviors and refusing safe prompts, this loss implicitly minimizes the Kullback-Leibler divergence with regards to the original model distribution $f_{\theta_0} (y|x)$.

Instead of minimizing the empirical average in Eq.~\eqref{eq:Xhonneux_original}, \sys{} minimizes a worst-case expected loss over distributions close to the empirical distribution by introducing
% In this work, we augment the method \eqref{eq:Xhonneux_original} utilizing 
a computationally inexpensive DRO-reweighting layer, which has the effect of focusing the training on the more difficult adversarial samples.  Specifically, let $P_n=\frac{1}{n}\sum_{i=1}^n \delta_{(x_i,y_i,\hat{y}_i)}$ denote the empirical distribution training set.
For an $f$-divergence $D_f$, we define the distributionally robust adversarial objective
% we propose training using an DRO-based variant of \eqref{eq:Xhonneux_original} which utilizes $f$-divergence reweighting.  Specifically, we propose the method
\begin{align}\label{eq:reformulation}
&\inf_{\theta\in\Theta}\sup_{Q: D_f(Q\|P_n)\leq \epsilon}\{E_{(x,y,\hat{y})\sim Q}[\mathcal{L}_\theta(x,y,\hat{y})]-\kappa D_f(Q\|P_n)\}.
\end{align}
The radius $\epsilon$ limits how far the reweighting distribution $Q$ may move from $P_n$, while $\kappa>0$ adds a soft-constraint divergence penalty which acts as a   regularization term for numerical stability. The inner maximization therefore emphasizes high-loss adversarial examples on which the current model is most vulnerable.

Using convex duality, Eq.~\eqref{eq:reformulation} can be written as
\begin{align}
&\inf_{\theta\in\Theta}\inf_{\substack{\lambda\geq0,\\\rho\in\mathbb{R}} }\!\left\{\!\lambda \epsilon+\rho+(\lambda+\kappa) E_{(x,y,\hat{y})\sim P_n}\!\!\left[f^*\left(\frac{\mathcal{L}_\theta(x,y,\hat{y})-\rho}{\lambda+\kappa}\right)\right]\right\}\notag\,,
\end{align}
where $f^*$ denotes the Legendre transform of $f$. The derivation is provided in Appendix~\ref{app:DRO_duality_proof}. This dual form shows that \sys{} can be implemented as a lightweight loss-aggregation layer on top of existing continuous adversarial training methods, involving two additional dynamical parameters, $\lambda$ and $\rho$.
% where the equality follows from convex duality; see Appendix \ref{app:DRO_duality_proof} for  detailed derivation. 

\subsection{KL-DRO Objective and Adaptive Reweighting}
In this work, we focus on the KL-divergence instantiation of  Eq.~\ref{eq:reformulation}. For $D_f(Q\|P_n)=\mathrm{KL}(Q\|P_n)$, the dual objective simplifies to 
\begin{align}\label{eq:reformulation_final_KL}
&\inf_{\theta\in\Theta}\sup_{Q: \mathrm{KL}(Q\|P_n)\leq \epsilon}\{E_{(x,y,\hat{y})\sim Q}[\mathcal{L}_\theta(x,y,\hat{y})]-\kappa \mathrm{KL}(Q\|P_n)\}\\
=&\inf_{\theta\in\Theta}\inf_{\lambda\geq0 }\!\left\{\!\lambda \epsilon+(\lambda+\kappa)\log E_{(x,y,\hat{y})\sim P_n}\!\left[\exp\left(\frac{\mathcal{L}_\theta(x,y,\hat{y})}{\lambda+\kappa}\right)\right]\right\}\notag\,.
\end{align}
This log-sum-exp objective interpolates between average-case and worst-case training: larger $\lambda+\kappa$ approaches uniform averaging, while smaller values concentrate weight on high-loss samples. Hence, $\lambda$ controls the reweighting strength.

In practice, \sys uses adaptive treatments for setting $\lambda$. We consider two main variants. 
In the learnable variant, $\lambda$ is optimized jointly with the model parameters. In the optimized variant, $\lambda$ is recomputed at each training iteration by solving the one-dimensional convex dual problem in Eq.~\eqref{eq:reformulation_final_KL}. Since the objective is convex in $\lambda$, this update can be performed efficiently with a bisection solver; see Appendix~\ref{app:bisection_lambda}. 
In both adaptive variants, the resulting $\lambda_t$ changes across training iterations and minibatches, allowing \sys{} to adjust the amount of reweighting to the current adversarial loss distribution.
For completeness, we also evaluate a fixed-$\lambda$ setting in the ablation study, where $\lambda$ is held constant as a hyperparameter. We do not treat this setting as a primary variant of \sys{}, since it lacks minibatch-adaptive control of the DRO reweighting strength and performs substantially worse in our experiments; see Section~\ref{subsec:ablation}.

% {\bf \color{blue} JB: I don't think we should promote $\lambda$-fixed as a version of our method; it really isn't and since it doesn't work well then we should make a clear distinction between our approach and that one.  If we do mention $\lambda$-fixed here, it should be after the other two  approaches   and we should immediately say that $\lambda$ fixed doesn't work well; this should help make our implementation  seem less trivial. }

% Note that, in addition to a hard DRO constraint, $D_f(Q\|P_n)\leq \epsilon$, \eqref{eq:reformulation_final} employs a soft-constraint penalty term $\kappa D_f(Q\|P_n)$, $\kappa>0$, as a regularization to stabilize the numerical implementation; see Appendix \ref{app:bisection_lambda} for further discussion.

% \subsection{Including Utility Loss}
\subsection{DRO Reweighting with Utility Preservation}
Some adversarial training methods include a utility dataset, $\mathcal{D}_u$ to preserve benign instruction-following ability and prevent excessive refusal.
In this setting, we similary apply DRO reweighting only to the adversarial loss and add the standard utility loss separately.  For the continuous attack of \cite{xhonneux2024efficient_CAT},  we have the following KL-divergence variant with utility regularization
\begin{align}\label{eq:with_utility}
&\inf_{\theta\in\Theta}\left\{ \inf_{\lambda\geq0}\left\{\lambda\epsilon+(\lambda+\kappa) \log E_{(x,y,\hat{y})\sim P_n}\!\!\left[\exp\!\left(\frac{\mathcal{L}_\theta(x,y,\hat{y})}{\lambda+\kappa}\right)\right]  \right\} -E_{(x_u,y_u)\sim \mathcal{D}_u}[\log f_\theta(y_u|x_u)]     \right\}\,,\\
&\mathcal{L}_\theta(x,y,\hat{y})\coloneqq \log f_\theta(\hat{y}|x+\delta(x,\hat{y}))-\log f_\theta(y|x+\delta(x,\hat{y}))\,,\notag
\end{align}
which should be compared with  Eq.~(4) of \cite{xhonneux2024efficient_CAT}.
This objective preserves the base adversarial loss while replacing its uniform empirical average with KL-DRO aggregation.

The same DRO-reweighting mechanism can be added on top of stronger base adversarial training pipelines - MixAT method; see Eq.~(7) in \cite{dekany2025mixat}. In the KL case this  again has the form \eqref{eq:with_utility}, only with the loss function changed to
\begin{align}
   \mathcal{L}_\theta(x,y,\hat{y})\coloneqq \log f_\theta(\hat{y}|\hat{x})-\log f_\theta(y|\hat{x})\,,
\end{align}
where $\hat{x}$ is the adversarial sample generated by the MixAT attack:
\begin{align}
    \hat{x}=\text{argmax}_{x^\prime\in \mathcal{N}(x)}\log f_\theta(\hat{y}|x^\prime)\,.
\end{align}
Thus, \sys{} is modular: it does not modify the adversarial example generation procedure, optimizer, perturbation budget, or utility objective of the base method. It only replaces the aggregation of per-sample adversarial losses with a distributionally robust objective. 
For more general $f$-divergences, similar methods can be developed by generalizing \eqref{eq:reformulation}.  However, in this work we focus on the KL variants.

% \section{Experiments}
\section{Experimental Details}\label{sec:eval_setup}

\subsection{LLM Models}

We evaluate \sys{} on four open-source instruction-tuned LLMs: Zephyr-7B~\citep{zephyr_7b}, Mistral-7B~\citep{mistral_7b}, Llama2-7B~\citep{llama2_7b}, and Llama3-8B~\citep{llama3_8b}. These models span different model families and safety-tuning recipes, allowing us to assess whether DRO reweighting provides consistent gains across models with varying inherent robustness.

For each model, we compare the undefended instruction-tuned model, the corresponding continuous adversarial training baselines, and their \sys{}-augmented variants. We denote the latter as CAT-\sys{}, CAPO-\sys{}, and MixAT-\sys{}, with suffixes $L$ and $O$ indicating learnable and optimized treatments of the dual variable $\lambda$. CAT-\sys{} and CAPO-\sys{} follow the CAT/CAPO setup of~\citet{xhonneux2024efficient_CAT}, while MixAT-\sys{} follows~\citet{dekany2025mixat}. When available, we include both released checkpoints and reproduced baselines using public code, denoted by \textsc{HF} and \textsc{R}. Each \sys{} variant is initialized from the same base model as its corresponding baseline and differs only in adversarial-loss aggregation.

\subsection{Datasets}

To isolate the effect of \sys{}, we follow the data protocol of each base method. CAT-\sys{} and CAPO-\sys{} use the CAT/CAPO adversarial prompt--response data from~\citet{xhonneux2024efficient_CAT}, while MixAT-\sys{} uses the adversarial and paraphrase-augmented data from~\citet{dekany2025mixat}. Utility datasets, when used by the base method, are kept unchanged.

For robustness evaluation, we use HarmBench~\citep{mazeika2024harmbench} and evaluate against direct harmful requests, human-written jailbreaks, AutoDAN, and GCG. Following MixAT~\citep{dekany2025mixat}, we evaluate on a fixed subset of 40 non-copyright-related HarmBench samples due to the high cost of optimization-based attacks.

For utility evaluation, we report MMLU, ARC-Easy, ARC-Challenge, and the harmless-query benchmark from~\citet{xhonneux2024efficient_CAT}, which measures benign instruction-following on 40 simple queries.

\subsection {Training Setup and Hyperparameters}

We implement \sys{} on top of the public code and configurations of the corresponding baselines. All base training components are kept unchanged, including attack generation, perturbation budget, inner attack steps, optimizer, learning-rate schedule, batch size, utility-loss weight, and training budget. The only modification is replacing the empirical average of per-sample adversarial losses with the proposed KL-DRO aggregation.

We tune only the DRO-specific parameters: the KL radius $\epsilon$, soft-constraint coefficient $\kappa$, and treatment of $\lambda$. In $\sys{}_L$, $\lambda$ is learned jointly with the model; in $\sys{}_O$, it is recomputed at each step by solving the one-dimensional convex dual problem with bisection. For each model, we select hyperparameters based on the validation robustness--utility trade-off. Full hyperparameters are listed in Appendix~\ref{app:params}.

All evaluations use the same decoding settings, attack implementations, and benchmark protocols across methods. Robustness is measured by attack success rate for each attack type and on average; utility is measured on knowledge, reasoning, and harmless-query benchmarks. Evaluation data are not used for training or hyperparameter selection.

\subsection{Hardware}
All experiments were performed on the internal and external clusters using
% {\bf \color{blue} do you mean external and internal clusters? }  
G4, 40GB A100, or 80GB A100 GPUs. All conducted experiments required at least 1,162 GPU hours.
% 1038 eval + 124 training = 

\section{Results}
\label{sec:results}

\begin{table*}[t]
\caption{Safety--utility evaluation across models and attack settings.}
\label{tab:main_results}
\centering
\small
\renewcommand{\arraystretch}{1.15}
\resizebox{\textwidth}{!}{
\begin{tabular}{llcccccccccc}
\toprule
\multirow{2}{*}{\textbf{Model}} &
\multirow{2}{*}{\textbf{Variant}} & \multicolumn{5}{c}{Attack Success Rate (ASR, \%) $\downarrow$}
& \multicolumn{5}{c}{Utility (\%) $\uparrow$} \\
\cmidrule(lr){3-7}
\cmidrule(lr){8-12}
&
& \textbf{DR}
& \textbf{HH}
& \textbf{AD}
& \textbf{GCG} 
& \textbf{Avg.}
& \textbf{MMLU} & \textbf{ARCe} & \textbf{ARCc} & \textbf{Hless} & \textbf{Avg.} \\
\midrule

%% ── Zephyr-7B ─────────────────────────────────────────────────────────────────
%% Plain row: no color scaling (neutral/white cells)
\multirow{10}{*}{\rotatebox{90}{Zephyr-7B}}
& Plain (HF)
& 82.5 & 87.5 & 95 & 77.5 & 85.63
& 59.07 & 86.74 & 73.72 & 100 & 79.88 \\ \cmidrule(r){2-12}

%% Remaining rows re-scaled among themselves (ASR: 0–22.62; Util avg: 64.94–79.8)
& CAT (HF)  
& \asrcell{2.5}{0}{22.62} 
& \asrcell{2.5}{0}{87.5} 
& \asrcell{0}{0}{95} 
& \asrcell{15}{0}{77.5}
& \asrcell{5}{0}{22.62}
& \utilitycell{56}{64.94}{79.8} 
& \utilitycell{85.56}{82.87}{87.54} 
& \utilitycell{74.31}{70.65}{74.57} 
& \utilitycell{97.5}{47.5}{100} 
& \utilitycell{78.34}{64.94}{79.8} \\ 

& CAT (R)   
& \asrcell{2.5}{0}{22.62} 
& \asrcell{2.5}{0}{87.5} 
& \asrcell{2.5}{0}{95} 
& \asrcell{22.5}{0}{77.5}
& \asrcell{7.5}{0}{22.62}
& \utilitycell{57.05}{64.94}{79.8} 
& \utilitycell{85.23}{82.87}{87.54} 
& \utilitycell{73.04}{70.65}{74.57} 
& \utilitycell{90}{47.5}{100} 
& \utilitycell{76.33}{64.94}{79.8} \\ 

& CAT-$\sys{}_L$ 
& \asrcell{2.5}{0}{22.62} 
& \asrcell{0.5}{0}{87.5} 
& \asrcell{0}{0}{95} 
& \asrcell{12.5}{0}{77.5} 
& \asrcell{3.88}{0}{22.62}
& \utilitycell{56.7}{64.94}{79.8} 
& \utilitycell{84.93}{82.87}{87.54} 
& \utilitycell{73.29}{70.65}{74.57} 
& \utilitycell{100}{47.5}{100} 
& \utilitycell{78.73}{64.94}{79.8} \\ 

& CAT-$\sys_O$ 
& \asrcell{2.5}{0}{22.62} 
& \asrcell{2}{0}{87.5} 
& \asrcell{0}{0}{95} 
& \asrcell{20}{0}{77.5}
& \asrcell{6.25}{0}{22.62}
& \utilitycell{56.54}{64.94}{79.8} 
& \utilitycell{84.85}{82.87}{87.54} 
& \utilitycell{72.54}{70.65}{74.57} 
& \utilitycell{97.5}{47.5}{100} 
& \utilitycell{77.86}{64.94}{79.8} \\ \cmidrule(r){2-12}

& CAPO (R)  
& \asrcell{25}{0}{22.62} 
& \asrcell{13}{0}{87.5} 
& \asrcell{30}{0}{95} 
& \asrcell{22.5}{0}{77.5}
& \asrcell{22.62}{0}{22.62}
& \utilitycell{58.24}{64.94}{79.8} 
& \utilitycell{86.44}{82.87}{87.54} 
& \utilitycell{74.14}{70.65}{74.57} 
& \utilitycell{97.5}{47.5}{100} 
& \utilitycell{79.8}{64.94}{79.8} \\

& CAPO-$\sys{}_L$  
& \asrcell{20}{0}{22.62} 
& \asrcell{24}{0}{87.5} 
& \asrcell{22.5}{0}{95} 
& \asrcell{12.5}{0}{77.5}
& \asrcell{19.75}{0}{22.62}
& \utilitycell{56.35}{64.94}{79.8} 
& \utilitycell{82.87}{82.87}{87.54} 
& \utilitycell{70.65}{70.65}{74.57} 
& \utilitycell{55}{47.5}{100} 
& \utilitycell{66.22}{64.94}{79.8} \\

& CAPO-$\sys_O$  
& \asrcell{17.5}{0}{22.62} 
& \asrcell{8.5}{0}{87.5} 
& \asrcell{17.5}{0}{95} 
& \asrcell{5}{0}{77.5}
& \asrcell{12.13}{0}{22.62}
& \utilitycell{56.53}{64.94}{79.8} 
& \utilitycell{83.88}{82.87}{87.54} 
& \utilitycell{71.84}{70.65}{74.57} 
& \utilitycell{47.5}{47.5}{100} 
& \utilitycell{64.94}{64.94}{79.8} \\ \cmidrule(r){2-12}

& MixAT (HF) 
& \asrcell{2.5}{0}{22.62} 
& \asrcell{2.5}{0}{87.5} 
& \asrcell{0}{0}{95} 
& \asrcell{0}{0}{77.5}
& \asrcell{1.25}{0}{22.62}
& \utilitycell{57.45}{64.94}{79.8} 
& \utilitycell{87.54}{82.87}{87.54} 
& \utilitycell{73.63}{70.65}{74.57} 
& \utilitycell{97.5}{47.5}{100} 
& \utilitycell{79.03}{64.94}{79.8} \\ 

& MixAT-$\sys_L$
& \asrcell{0}{0}{22.62} 
& \asrcell{0}{0}{87.5} 
& \asrcell{0}{0}{95} 
& \asrcell{0}{0}{77.5}
& \asrcell{0}{0}{22.62}
& \utilitycell{57.43}{64.94}{79.8} 
& \utilitycell{87.12}{82.87}{87.54} 
& \utilitycell{74.57}{70.65}{74.57} 
& \utilitycell{100}{47.5}{100} 
& \utilitycell{79.78}{64.94}{79.8} \\ \midrule

%% ── Mistral-7B ────────────────────────────────────────────────────────────────
%% Plain row: no color scaling
\multirow{7}{*}{\rotatebox{90}{Mistral-7B}}
& Plain (HF)
& 80 & 77.5 & 95 & 87.5 & 85
& 54.36 & 82.65 & 67.32 & 100 & 76.08 \\ \cmidrule(r){2-12}

%% Re-scaled among remaining rows (ASR avg: 5.75–44.12; Util avg: 73.79–75.76)
& CAT (R) 
& \asrcell{2.5}{2.5}{67.5} 
& \asrcell{2.5}{0.5}{29} 
& \asrcell{2.5}{0}{27.5} 
& \asrcell{32.5}{17.5}{55} 
& \asrcell{10}{5.75}{44.12} 
& \utilitycell{53.58}{51.46}{54.36} 
& \utilitycell{81.86}{80.89}{82.65} 
& \utilitycell{67.58}{65.27}{67.58} 
& \utilitycell{100}{95}{100} 
& \utilitycell{75.76}{73.79}{75.76} \\

& CAT-$\sys{}_L$ 
& \asrcell{2.5}{2.5}{67.5} 
& \asrcell{2.5}{0.5}{29} 
& \asrcell{0}{0}{27.5} 
& \asrcell{17.5}{17.5}{55} 
& \asrcell{5.62}{5.75}{44.12} 
& \utilitycell{53.74}{51.46}{54.36} 
& \utilitycell{80.89}{80.89}{82.65} 
& \utilitycell{66.89}{65.27}{67.58} 
& \utilitycell{95}{95}{100} 
& \utilitycell{74.13}{73.79}{75.76} \\

& CAT-$\sys_O$ 
& \asrcell{2.5}{2.5}{67.5} 
& \asrcell{0.5}{0.5}{29} 
& \asrcell{2.5}{0}{27.5} 
& \asrcell{17.5}{17.5}{55} 
& \asrcell{5.75}{5.75}{44.12} 
& \utilitycell{54.14}{51.46}{54.36} 
& \utilitycell{81.69}{80.89}{82.65} 
& \utilitycell{65.96}{65.27}{67.58} 
& \utilitycell{95}{95}{100} 
& \utilitycell{74.2}{73.79}{75.76} \\ \cmidrule(r){2-12}

& CAPO (R) 
& \asrcell{67.5}{2.5}{67.5} 
& \asrcell{29}{0.5}{29} 
& \asrcell{25}{0}{27.5} 
& \asrcell{55}{17.5}{55} 
& \asrcell{44.12}{5.75}{44.12} 
& \utilitycell{51.46}{51.46}{54.36} 
& \utilitycell{80.93}{80.89}{82.65} 
& \utilitycell{65.27}{65.27}{67.58} 
& \utilitycell{97.5}{95}{100} 
& \utilitycell{73.79}{73.79}{75.76} \\

& CAPO-$\sys{}_L$ 
& \asrcell{55}{2.5}{67.5} 
& \asrcell{19.5}{0.5}{29} 
& \asrcell{27.5}{0}{27.5} 
& \asrcell{52.5}{17.5}{55} 
& \asrcell{38.63}{5.75}{44.12} 
& \utilitycell{53.32}{51.46}{54.36} 
& \utilitycell{81.57}{80.89}{82.65} 
& \utilitycell{66.55}{65.27}{67.58} 
& \utilitycell{95}{95}{100} 
& \utilitycell{74.11}{73.79}{75.76} \\

& CAPO-$\sys_O$ 
& \asrcell{32.5}{2.5}{67.5} 
& \asrcell{7.5}{0.5}{29} 
& \asrcell{15}{0}{27.5} 
& \asrcell{27.5}{17.5}{55} 
& \asrcell{20.63}{5.75}{44.12} 
& \utilitycell{52.38}{51.46}{54.36} 
& \utilitycell{81.82}{80.89}{82.65} 
& \utilitycell{66.98}{65.27}{67.58} 
& \utilitycell{95}{95}{100} 
& \utilitycell{74.05}{73.79}{75.76} \\ \midrule

%% ── Llama2-7B ─────────────────────────────────────────────────────────────────
%% Plain row: no color scaling
\multirow{7}{*}{\rotatebox{90}{Llama2-7B}}
& Plain (HF)
& 2.5 & 5 & 5 & 32.5 & 11.25
& 45.72 & 72.26 & 55.97 & 100 & 68.49 \\ \cmidrule(r){2-12}

%% Re-scaled among remaining rows (ASR avg: 0–11.25; Util avg: 64.91–67.33)
& CAT (HF) 
& \asrcell{5}{0}{5} 
& \asrcell{5}{0}{5} 
& \asrcell{5}{0}{7.5} 
& \asrcell{30}{0}{32.5} 
& \asrcell{11.25}{0.125}{11.25} 
& \utilitycell{45.74}{44.74}{46.05} 
& \utilitycell{71.21}{71}{73.02} 
& \utilitycell{54.1}{54.01}{55.97} 
& \utilitycell{92.5}{85}{100} 
& \utilitycell{65.89}{64.91}{67.33} \\

& CAT (R)  
& \asrcell{2.5}{0}{5} 
& \asrcell{2.5}{0}{5} 
& \asrcell{5}{0}{7.5} 
& \asrcell{32.5}{0}{32.5} 
& \asrcell{10.63}{0.125}{11.25} 
& \utilitycell{45.02}{44.74}{46.05} 
& \utilitycell{71.17}{71}{73.02} 
& \utilitycell{54.1}{54.01}{55.97} 
& \utilitycell{97.5}{85}{100} 
& \utilitycell{66.95}{64.91}{67.33} \\

& CAT-$\sys{}_L$ 
& \asrcell{0}{0}{5} 
& \asrcell{1}{0}{5} 
& \asrcell{7.5}{0}{7.5} 
& \asrcell{22.5}{0}{32.5} 
& \asrcell{7.75}{0.125}{11.25} 
& \utilitycell{46.05}{44.74}{46.05} 
& \utilitycell{71}{71}{73.02} 
& \utilitycell{54.78}{54.01}{55.97} 
& \utilitycell{97.5}{85}{100} 
& \utilitycell{67.33}{64.91}{67.33} \\

& CAT-$\sys_O$
& \asrcell{2.5}{0}{5} 
& \asrcell{1}{0}{5} 
& \asrcell{2.5}{0}{7.5} 
& \asrcell{20}{0}{32.5} 
& \asrcell{6.5}{0.125}{11.25} 
& \utilitycell{44.74}{44.74}{46.05} 
& \utilitycell{71}{71}{73.02} 
& \utilitycell{54.27}{54.01}{55.97} 
& \utilitycell{97.5}{85}{100} 
& \utilitycell{66.88}{64.91}{67.33} \\ \cmidrule(r){2-12}

& CAPO (R) 
& \asrcell{0}{0}{5} 
& \asrcell{0}{0}{5} 
& \asrcell{0}{0}{7.5} 
& \asrcell{0}{0}{32.5} 
& \asrcell{0}{0.125}{11.25} 
& \utilitycell{45.84}{44.74}{46.05} 
& \utilitycell{73.02}{71}{73.02} 
& \utilitycell{55.8}{54.01}{55.97} 
& \utilitycell{85}{85}{100} 
& \utilitycell{64.91}{64.91}{67.33} \\

& CAPO-$\sys{}_L$ 
& \asrcell{0}{0}{5} 
& \asrcell{1}{0}{5} 
& \asrcell{0}{0}{7.5} 
& \asrcell{0}{0}{32.5} 
& \asrcell{0.25}{0.125}{11.25} 
& \utilitycell{45.12}{44.74}{46.05} 
& \utilitycell{71.89}{71}{73.02} 
& \utilitycell{54.27}{54.01}{55.97} 
& \utilitycell{97.5}{85}{100} 
& \utilitycell{67.2}{64.91}{67.33} \\ 

& CAPO-$\sys_O$ 
& \asrcell{0}{0}{5} 
& \asrcell{0.5}{0}{5} 
& \asrcell{0}{0}{7.5} 
& \asrcell{0}{0}{32.5} 
& \asrcell{0.125}{0.125}{11.25} 
& \utilitycell{45.22}{44.74}{46.05} 
& \utilitycell{71.97}{71}{73.02} 
& \utilitycell{54.01}{54.01}{55.97} 
& \utilitycell{97.5}{85}{100} 
& \utilitycell{67.18}{64.91}{67.33} \\ \midrule

%% ── Llama3-8B ─────────────────────────────────────────────────────────────────
%% Plain row: no color scaling
\multirow{7}{*}{\rotatebox{90}{Llama3-8B}}
& Plain (HF)
& 2.5 & 7.5 & 0 & 10 & 5
& 64.94 & 91.41 & 81.05 & 100 & 84.35 \\ \cmidrule(r){2-12}

%% Re-scaled among remaining rows (ASR avg: 0–6.25; Util avg: 74.78–83.63)
& CAT (HF) 
& \asrcell{0}{0}{2.5} 
& \asrcell{2.5}{0}{7.5} 
& \asrcell{0}{0}{0.0001} 
& \asrcell{2.5}{0}{22.5} 
& \asrcell{1.25}{0}{6.25} 
& \utilitycell{63.47}{63.46}{64.94} 
& \utilitycell{90.91}{90.31}{91.41} 
& \utilitycell{78.75}{78.75}{81.05} 
& \utilitycell{85}{65}{100} 
& \utilitycell{79.53}{74.78}{83.63} \\

& CAT (R)  
& \asrcell{2.5}{0}{2.5} 
& \asrcell{0}{0}{7.5} 
& \asrcell{0}{0}{0.0001} 
& \asrcell{22.5}{0}{22.5} 
& \asrcell{6.25}{0}{6.25} 
& \utilitycell{64.18}{63.46}{64.94} 
& \utilitycell{90.66}{90.31}{91.41} 
& \utilitycell{79.69}{78.75}{81.05} 
& \utilitycell{100}{65}{100} 
& \utilitycell{83.63}{74.78}{83.63} \\

& CAT-$\sys{}_L$ 
& \asrcell{0}{0}{2.5} 
& \asrcell{0}{0}{7.5} 
& \asrcell{0}{0}{0.0001} 
& \asrcell{0}{0}{22.5} 
& \asrcell{0}{0}{6.25} 
& \utilitycell{63.58}{63.46}{64.94} 
& \utilitycell{90.91}{90.31}{91.41} 
& \utilitycell{79.61}{78.75}{81.05} 
& \utilitycell{65}{65}{100} 
& \utilitycell{74.78}{74.78}{83.63} \\

& CAT-$\sys_O$ 
& \asrcell{0}{0}{2.5} 
& \asrcell{0.5}{0}{7.5} 
& \asrcell{0}{0}{0.0001} 
& \asrcell{17.5}{0}{22.5} 
& \asrcell{4.5}{0}{6.25} 
& \utilitycell{64}{63.46}{64.94} 
& \utilitycell{90.4}{90.31}{91.41} 
& \utilitycell{79.86}{78.75}{81.05} 
& \utilitycell{100}{65}{100} 
& \utilitycell{83.57}{74.78}{83.63} \\ \cmidrule(r){2-12}

& MixAT (HF) 
& \asrcell{0}{0}{2.5} 
& \asrcell{0}{0}{7.5} 
& \asrcell{0}{0}{0.0001} 
& \asrcell{12.5}{0}{22.5} 
& \asrcell{3.125}{0}{6.25} 
& \utilitycell{63.52}{63.46}{64.94} 
& \utilitycell{90.31}{90.31}{91.41} 
& \utilitycell{79.01}{78.75}{81.05} 
& \utilitycell{90}{65}{100} 
& \utilitycell{80.71}{74.78}{83.63} \\ 

& MixAT-$\sys{}_L$
& \asrcell{0}{0}{2.5} 
& \asrcell{0}{0}{7.5} 
& \asrcell{0}{0}{0.0001} 
& \asrcell{5}{0}{22.5} 
& \asrcell{1.25}{0}{6.25} 
& \utilitycell{63.46}{63.46}{64.94} 
& \utilitycell{90.53}{90.31}{91.41} 
& \utilitycell{78.84}{78.75}{81.05} 
& \utilitycell{100}{65}{100} 
& \utilitycell{83.21}{74.78}{83.63} \\ 

\bottomrule
\end{tabular}
}

\vspace{2pt}
\begin{minipage}{0.95\linewidth}
\footnotesize
(HF) model released on HuggingFace; (R) re-trained model or prompt using public code; Plain denotes the original basic model without any defense \\
Attack success rate (ASR) $\downarrow$ indicates lower is better for safety robustness,
while utility metrics $\uparrow$ indicate higher is better \\
\textbf{DR}: Direct Request; \textbf{HH}: Human Jailbreaks; \textbf{AD}: AutoDAN; \textbf{GCG}: Greedy Coordinate Gradient. \\
\textbf{MMLU}: Massive Multitask Language Understanding; \textbf{ARCe}: AI2 Reasoning Challenge (Easy); \textbf{ARCc}: AI2 Reasoning Challenge (Challenge); \textbf{Hless}: Harmless dataset from~\cite{xhonneux2024efficient_CAT}. \\
Cells are color-scaled within each model block and metric column, excluding Plain rows. \\
\end{minipage}

\end{table*}

We evaluate whether \sys{} improves the safety--utility trade-off of continuous adversarial training. For each base method, we compare the original adversarial training baseline with its \sys{}-augmented variant across four instruction-tuned LLMs and four attack categories. Overall, \sys{} reduces attack success rates for CAT, CAPO, and MixAT variants in most settings, while generally preserving competitive utility.

\subsection{Main Results}
\label{subsec:main_results}

Table~\ref{tab:main_results} reports robustness and utility results on Zephyr-7B-$\beta$, Mistral-7B-Instruct-v0.1, Llama-2-7B-Chat, and Llama-3-8B-Instruct. The main trend is that \sys{} improves continuous adversarial training most when the base method leaves substantial residual vulnerability. This is most evident for CAPO: CAPO-\sys{}$_O$ reduces average ASR from $22.62\%$ to $12.13\%$ on Zephyr-7B-$\beta$ and from $44.12\%$ to $20.63\%$ on Mistral-7B, while maintaining comparable average utility on Mistral-7B. These gains are consistent with the KL-DRO objective, which upweights high-loss adversarial examples and therefore targets the vulnerable tail that remains after standard adversarial training.

\sys{} also improves CAT and MixAT, with smaller absolute gains when the corresponding baselines are already strong. On Mistral-7B, CAT-\sys{}$_L$ and CAT-\sys{}$_O$ reduce average ASR from $10.00\%$ to $5.62\%$ and $5.75\%$, respectively. On Zephyr-7B-$\beta$, CAT-\sys{}$_L$ reduces average ASR to $3.88\%$, and MixAT-\sys{}$_L$ improves an already robust MixAT baseline from $1.25\%$ to $0.00\%$ ASR. On Llama-2-7B, where several baselines are already highly robust, improvements are correspondingly smaller: CAT-\sys{}$_O$ reduces average ASR from $10.63\%$ to $6.50\%$, while CAPO-\sys{} variants preserve near-zero ASR and improve average utility from $64.91\%$ to over $67\%$.

The main exception is that stronger robustness can occasionally come with over-refusal. On Llama-3-8B, CAT-\sys{}$_L$ reaches $0.00\%$ average ASR but reduces harmless-query performance to $65.00\%$. In contrast, CAT-\sys{}$_O$ preserves utility close to CAT (\textsc{R}) while reducing ASR from $6.25\%$ to $4.50\%$, and MixAT-\sys{}$_L$ gives the best trade-off in this block, reducing ASR from $3.125\%$ to $1.25\%$ while improving average utility from $80.71\%$ to $83.21\%$.

Overall, the results support our hypothesis that DRO-based adversarial-loss aggregation improves robustness by emphasizing high-loss adversarial examples. The gains are largest when the base trainer leaves a heavy vulnerable tail and smaller when the baseline is already near saturation. The improvements across CAT, CAPO, and MixAT further show that \sys{} acts as a modular loss-aggregation layer rather than a method-specific modification.

\subsection{Ablation Studies}
\label{subsec:ablation}
We conduct ablations to isolate the effect of the two main DRO-specific design choices: the KL ambiguity radius $\epsilon$ and the treatment of the dual variable $\lambda$. All ablations are performed on Mistral-7B using the same training and evaluation pipeline as in the main experiments. We report attack success rates (ASR) under direct requests, human-written jailbreaks, AutoDAN, and GCG, together with standard utility metrics.

\begin{figure}[htbp]
    \centering

    \begin{subfigure}[b]{0.48\textwidth}
        \centering
        \includegraphics[width=\textwidth]{Figures/dro_eps_sensitivity_asr.png}
        \caption{ASR under different attack categories.}
        \label{fig:epsilon_ablation_asr}
    \end{subfigure}
    \hfill
    \begin{subfigure}[b]{0.48\textwidth}
        \centering
        \includegraphics[width=\textwidth]{Figures/dro_eps_sensitivity_util.png}
        \caption{Utility across evaluation benchmarks.}
        \label{fig:epsilon_ablation_util}
    \end{subfigure}

    \caption{
    Sensitivity to the KL-DRO radius $\epsilon$ on Mistral-7B CAPO-\sys{}$_O$.
    Moderate values of $\epsilon$ improve robustness, with the lowest average ASR around $\epsilon=0.1$, while utility remains relatively stable across the evaluated range.
    }
    \label{fig:epsilon_ablation}
\end{figure}

\paragraph{Effect of the DRO radius $\epsilon$.}
The radius $\epsilon$ controls how far the adversarial reweighting distribution may deviate from the empirical minibatch distribution. Smaller values keep the objective close to uniform averaging, whereas larger values allow the objective to place more mass on high-loss adversarial examples. Figure~\ref{fig:epsilon_ablation} shows a non-monotonic robustness trend. Increasing $\epsilon$ initially improves robustness, with the lowest average ASR achieved around $\epsilon=0.1$. However, larger values degrade ASR, suggesting that overly aggressive reweighting can overemphasize a small set of high-loss examples and reduce training effectiveness. In contrast, utility remains relatively stable across the evaluated range. These results indicate that moderate distributional robustness is beneficial, but the ambiguity radius must be tuned to obtain the best robustness--utility trade-off.

\begin{figure}[htbp]
    \centering

    \begin{subfigure}[b]{0.31\textwidth}
        \centering
        \includegraphics[width=\textwidth]{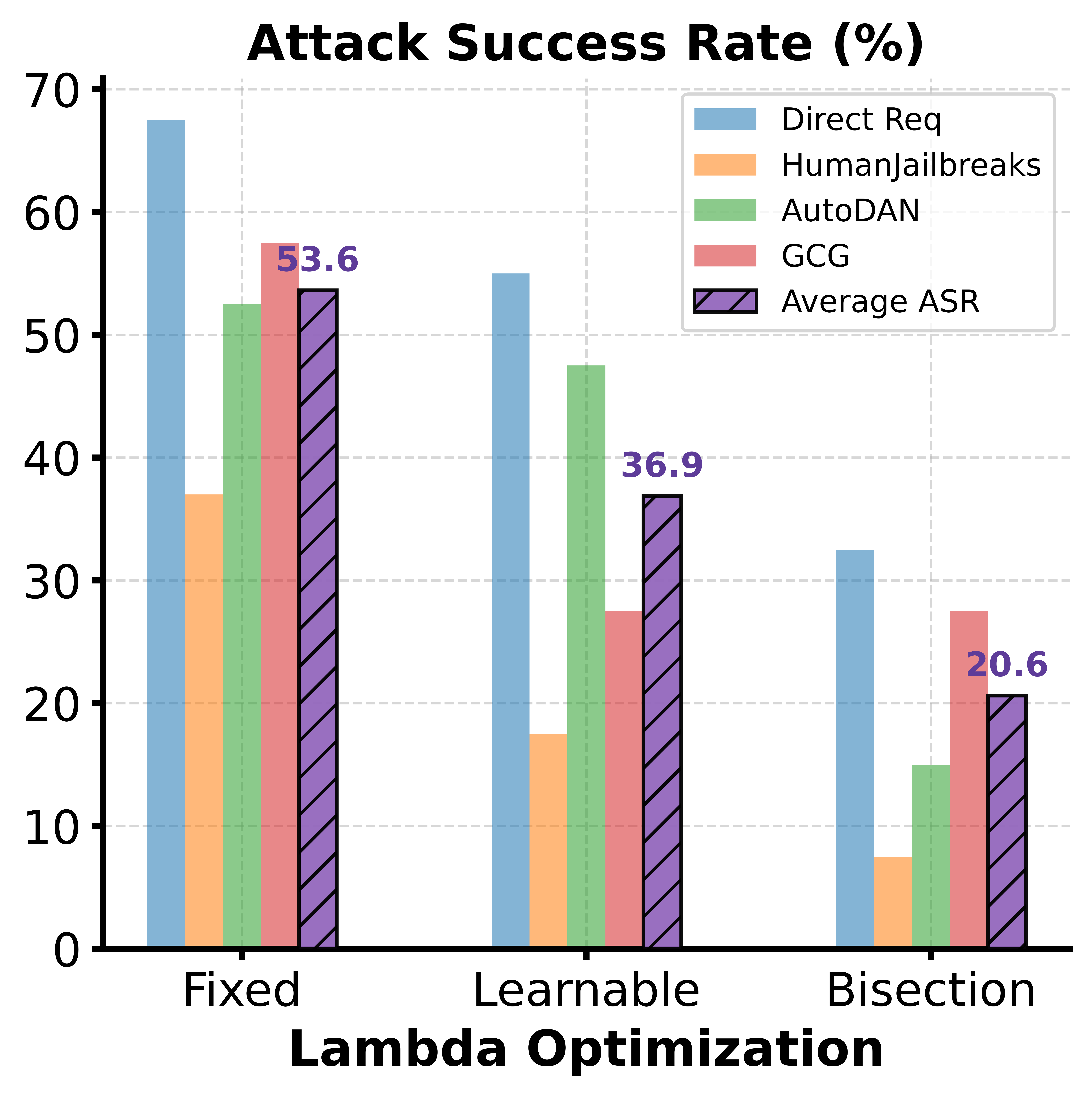}
        \caption{$\lambda=5$ / $\lambda_0=5$.}
        \label{fig:lambda_ablation_lambda5}
    \end{subfigure}
    \hfill
    \begin{subfigure}[b]{0.31\textwidth}
        \centering
        \includegraphics[width=\textwidth]{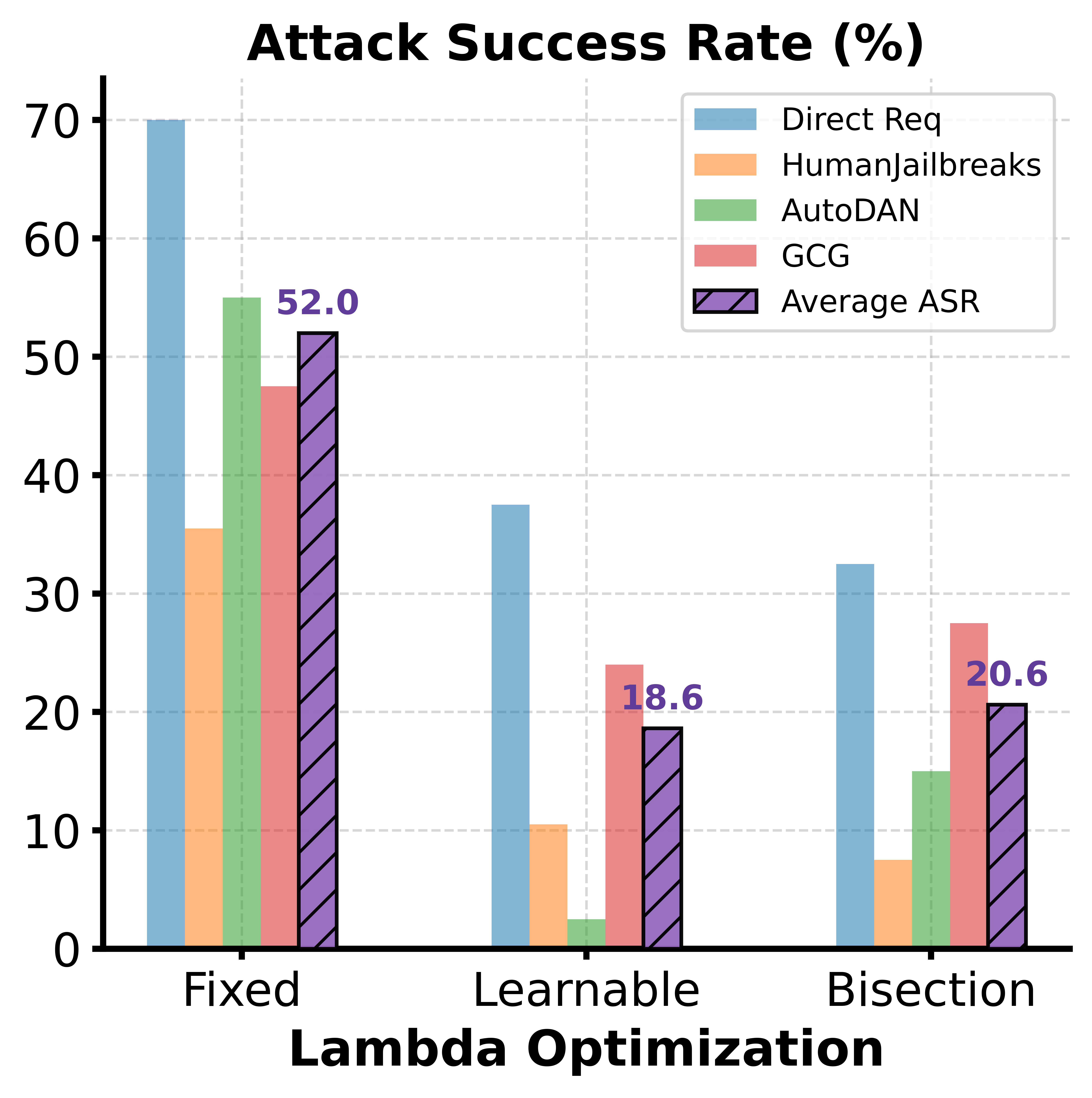}
        \caption{$\lambda=1$ / $\lambda_0=1$.}
        \label{fig:lambda_ablation_lambda1}
    \end{subfigure}
    \hfill
    \begin{subfigure}[b]{0.31\textwidth}
        \centering
        \includegraphics[width=\textwidth]{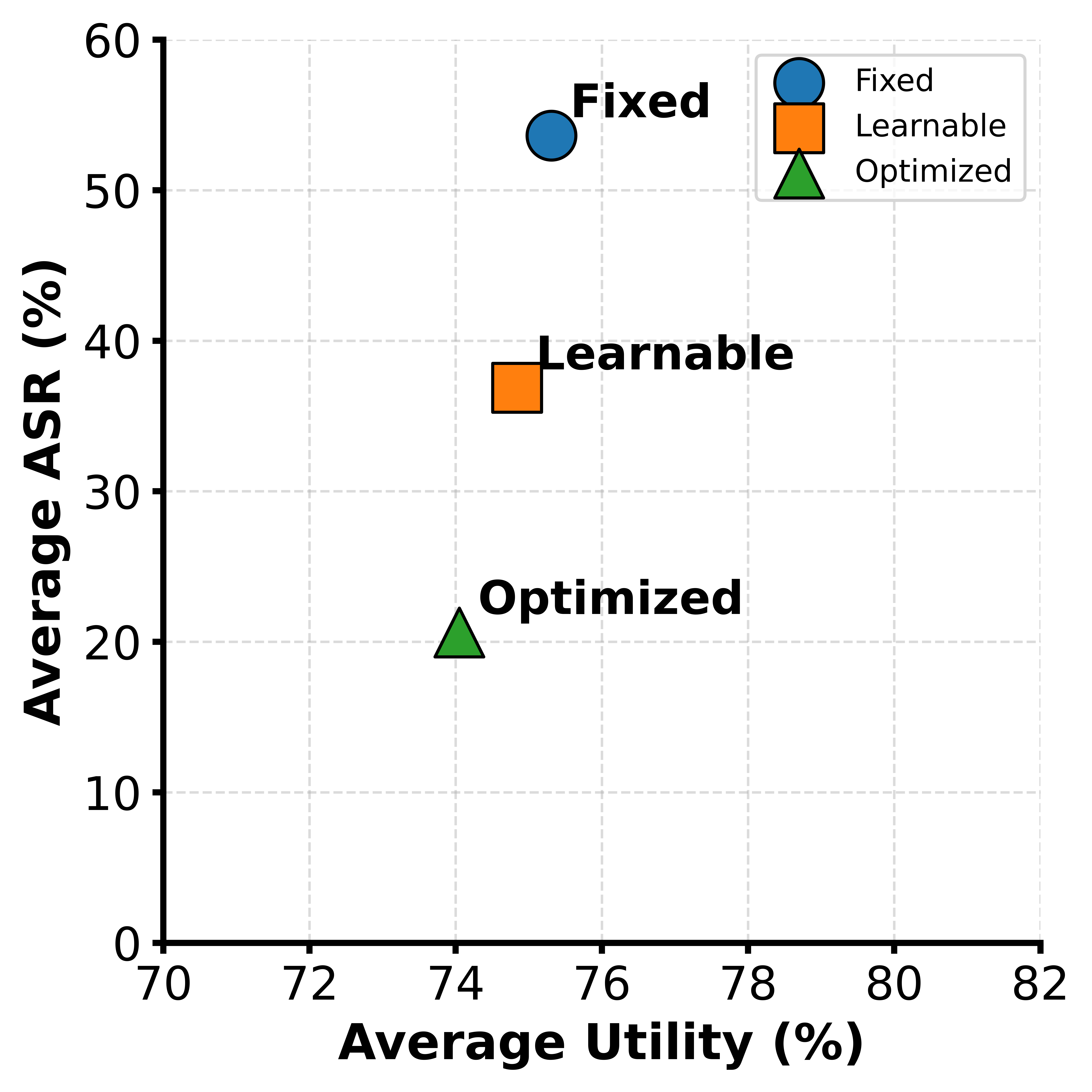}
        \caption{ASR--utility trade-off.}
        \label{fig:lambda_ablation_tradeoff}
    \end{subfigure}

    % \caption{
    % Ablation of dual-variable handling strategies on Mistral-7B.
    % For the fixed variant, $\lambda$ is held constant at the value shown in (a) or (b);
    % for $\sys_L$, this value is used as the initialization $\lambda_0$;
    % for $\sys_O$, $\lambda$ is recomputed at each training step by solving the dual problem and is therefore initialization-independent.
    % Optimizing $\lambda$ yields the lowest average ASR with comparable utility.}
    \caption{
Ablation of dual-variable handling strategies on Mistral-7B.
The fixed variant holds $\lambda$ constant; $\sys_L$ learns $\lambda$ from the shown initialization $\lambda_0$; and $\sys_O$ recomputes $\lambda$ by solving the dual problem at each step.
Optimizing $\lambda$ yields the lowest average ASR with comparable utility.
}
    \label{fig:lambda_ablation}
\end{figure}

\paragraph{Effect of the dual variable $\lambda$.}
We compare three strategies for handling the KL-DRO dual variable $\lambda$: keeping it fixed, learning it jointly with the model, and optimizing it at each training step by solving the one-dimensional convex dual problem.
For the fixed variant, $\lambda$ is held constant throughout training; for the learnable variant, the same value is used as the initialization $\lambda_0$; for the optimized variant, $\lambda$ is recomputed from the dual objective at each step and is therefore independent of the initialization.
Figures~\ref{fig:lambda_ablation_lambda5} and~\ref{fig:lambda_ablation_lambda1} show that the treatment of $\lambda$ has a substantial effect on robustness.
Fixed-$\lambda$ variants yield the highest average ASR, while learning $\lambda$ improves robustness.
The optimized variant performs best, reducing average ASR to approximately $20.6\%$, compared with $36.9\%$ for the learnable variant and $53.6\%$ for the fixed variant in the $\lambda=5$ setting.
This suggests that solving the dual problem provides a more effective minibatch-specific reweighting strength than using a predefined or slowly adapted value.

Figure~\ref{fig:lambda_training_ablation} further explains this gap through the training dynamics.
The learnable strategy changes $\lambda$ gradually from its initialization, reflecting a global gradient-based adaptation process coupled to the model update.
In contrast, the optimized strategy recomputes $\lambda^\star$ at each DRO checkpoint and rapidly moves toward smaller values.
Since smaller $\lambda$ makes the KL-DRO log-sum-exp aggregation sharper, $\sys_O$ can more quickly enter a worst-case-oriented reweighting regime when the minibatch contains hard adversarial examples.
The corresponding loss curves show that both variants reduce the training objective over time, although the batch-level loss remains noisy with occasional spikes, as expected in adversarial training.

\begin{figure}[htbp]
    \centering

    \begin{subfigure}[b]{0.48\textwidth}
        \centering
        \includegraphics[width=\textwidth]{Figures/lambda_training.png}
        \caption{Trajectory of the DRO dual variable.}
        \label{fig:lambda_ablation_training}
    \end{subfigure}
    \hfill
    \begin{subfigure}[b]{0.48\textwidth}
        \centering
        \includegraphics[width=\textwidth]{Figures/loss_training.png}
        \caption{Training loss dynamics.}
        \label{fig:lambda_ablation_training_loss}
    \end{subfigure}

%     \caption{Training dynamics under different DRO dual-variable treatments on Mistral-7B.
% Panel (a) plots the dual variable $\lambda$ for $\sys_L$ ($\lambda_0$ = 1) and the checkpoint-wise optimized $\lambda^\star$ for $\sys_O$.
% Panel (b) shows the corresponding total training loss.
% The optimized strategy rapidly decreases $\lambda^\star$, inducing sharper adversarial reweighting, while the learnable strategy changes more gradually.
%     }
\caption{
Training dynamics under learnable and optimized dual-variable treatments on Mistral-7B.
$\sys_O$ rapidly decreases the optimized $\lambda^\star$, inducing sharper adversarial reweighting, whereas $\sys_L$ changes $\lambda$ more gradually.
Both variants reduce the training loss, with noisy spikes typical of adversarial training.
}
    \label{fig:lambda_training_ablation}
\end{figure}

\paragraph{Robustness--utility trade-off.}
Figure~\ref{fig:lambda_ablation_tradeoff} summarizes the three $\lambda$ treatments in the ASR--utility plane.
The optimized variant achieves the lowest ASR while maintaining comparable average utility, whereas the fixed variant remains substantially less robust.
This indicates that the improvement of $\sys_O$ is not primarily due to sacrificing benign performance, but rather to selecting a sharper and more appropriate minibatch-specific reweighting strength.

Together with the $\epsilon$ ablation in Figure~\ref{fig:epsilon_ablation}, these results show that DRO reweighting must be sufficiently strong to emphasize vulnerable examples, but not so strong that it over-concentrates on a small set of high-loss samples.
This supports the central motivation of \sys{}: replacing uniform adversarial-loss aggregation with distributionally robust aggregation improves adversarial training, provided that the reweighting strength is adaptively controlled.

% The best results are therefore obtained when the DRO strength is neither fixed nor excessively aggressive, but is adapted to the current adversarial loss
% profile through intermediate $\epsilon$ values and minibatch-wise optimization of $\lambda$. These findings support the central motivation of \sys{}:
% replacing uniform adversarial-loss aggregation with distributionally robust aggregation can improve adversarial training, but the reweighting strength must be carefully controlled.

\section{Conclusion}
\label{sec:conclusion}

We presented \sys{}, a distributionally robust framework for adversarial training of LLMs. \sys{} replaces uniform aggregation of per-sample adversarial losses with an $f$-divergence DRO objective that emphasizes high-loss adversarial examples. In the KL case, convex duality yields a tractable log-sum-exp objective whose reweighting strength is controlled by a  dual variable $\lambda$. This makes \sys{} a lightweight and modular loss-aggregation layer that can be applied to existing continuous adversarial training methods without changing their attack generation pipeline.

Across multiple instruction-tuned LLMs and attack settings, \sys{} improves robustness for CAT, CAPO, and MixAT variants while generally preserving utility. The gains are largest when the base method retains nontrivial residual vulnerability, supporting the motivation that robust alignment benefits from focusing training on hard adversarial examples. Our ablations show that intermediate KL radii and adaptive optimization of $\lambda$ provide the best robustness--utility trade-off.

\paragraph{Limitations.}
\sys{} inherits the coverage and quality limitations of the adversarial data and base attack pipeline: reweighting hard observed examples does not guarantee robustness to substantially different unseen attacks. It also introduces DRO-specific hyperparameters, including $\epsilon$, $\kappa$, and the treatment of $\lambda$, which must be selected carefully to avoid weak reweighting or overemphasis on a small set of high-loss samples. Finally, our robustness evaluation follows prior work and uses a fixed HarmBench subset due to the cost of optimization-based attacks; broader evaluation on larger attack suites and larger models remains future work.

\paragraph{Broader Impact.}
This work aims to improve LLM robustness against adversarial prompting and reduce harmful outputs under jailbreak attacks. Because \sys{} is modular and lightweight, it may make robust alignment more practical across deployment settings. However, adversarial training can also increase over-refusal or degrade benign helpfulness if the safety--utility trade-off is not carefully evaluated. \sys{} should therefore be used as part of a broader safety pipeline that includes diverse red-teaming, utility and refusal-rate evaluation, monitoring for adaptive attacks, and transparent reporting of remaining failure modes.

\clearpage

\bibliography{refs}

\begin{thebibliography}{37}
\providecommand{\natexlab}[1]{#1}
\providecommand{\url}[1]{\texttt{#1}}
\expandafter\ifx\csname urlstyle\endcsname\relax
  \providecommand{\doi}[1]{doi: #1}\else
  \providecommand{\doi}{doi: \begingroup \urlstyle{rm}\Url}\fi

\bibitem[Ahmadi-Javid(2012)]{Javid2012}
A.~Ahmadi-Javid.
\newblock Entropic value-at-risk: A new coherent risk measure.
\newblock \emph{Journal of Optimization Theory and Applications}, 155:\penalty0 1105--1123, 2012.

\bibitem[Ali \& Silvey(1966)Ali and Silvey]{ali1966general}
Syed~Mumtaz Ali and Samuel~D Silvey.
\newblock A general class of coefficients of divergence of one distribution from another.
\newblock \emph{Journal of the Royal Statistical Society: Series B (Methodological)}, 28\penalty0 (1):\penalty0 131--142, 1966.

\bibitem[Altinisik et~al.(2025)Altinisik, Messaoud, Sencar, Sajjad, and Chawla]{altinisik2025explaining}
Enes Altinisik, Safa Messaoud, Husrev~Taha Sencar, Hassan Sajjad, and Sanjay Chawla.
\newblock Explaining the role of intrinsic dimensionality in adversarial training.
\newblock In \emph{International Conference on Machine Learning}, pp.\  1298--1313. PMLR, 2025.

\bibitem[Anil et~al.(2024)Anil, Durmus, Panickssery, Sharma, Benton, Kundu, Batson, Tong, Mu, Ford, et~al.]{anil2024many_redteaming}
Cem Anil, Esin Durmus, Nina Panickssery, Mrinank Sharma, Joe Benton, Sandipan Kundu, Joshua Batson, Meg Tong, Jesse Mu, Daniel Ford, et~al.
\newblock Many-shot jailbreaking.
\newblock \emph{Advances in Neural Information Processing Systems}, 37:\penalty0 129696--129742, 2024.

\bibitem[Azar et~al.(2024)Azar, Guo, Piot, Munos, Rowland, Valko, and Calandriello]{azar2024general_IPO}
Mohammad~Gheshlaghi Azar, Zhaohan~Daniel Guo, Bilal Piot, Remi Munos, Mark Rowland, Michal Valko, and Daniele Calandriello.
\newblock A general theoretical paradigm to understand learning from human preferences.
\newblock In \emph{International Conference on Artificial Intelligence and Statistics}, pp.\  4447--4455. PMLR, 2024.

\bibitem[Ben-Tal \& Teboulle(2007)Ben-Tal and Teboulle]{BenTal2007}
Aharon Ben-Tal and Marc Teboulle.
\newblock An old-new concept of convex risk measures: The optimized certainty equivalent.
\newblock \emph{Mathematical Finance}, 17\penalty0 (3):\penalty0 449--476, 2007.
\newblock \doi{10.1111/j.1467-9965.2007.00311.x}.
\newblock URL \url{https://onlinelibrary.wiley.com/doi/abs/10.1111/j.1467-9965.2007.00311.x}.

\bibitem[Birrell et~al.(2022)Birrell, Dupuis, Katsoulakis, Pantazis, and Rey-Bellet]{birrell2022f}
Jeremiah Birrell, Paul Dupuis, Markos~A Katsoulakis, Yannis Pantazis, and Luc Rey-Bellet.
\newblock $(f, {\Gamma})$-divergences: {I}nterpolating between f-divergences and integral probability metrics.
\newblock \emph{Journal of machine learning research}, 23\penalty0 (39):\penalty0 1--70, 2022.

\bibitem[Broniatowski \& Keziou(2006)Broniatowski and Keziou]{Broniatowski}
Michel Broniatowski and Amor Keziou.
\newblock Minimization of divergences on sets of signed measures.
\newblock \emph{Studia Scientiarum Mathematicarum Hungarica}, 43\penalty0 (4):\penalty0 403–442, 2006.

\bibitem[Cho et~al.(2025)Cho, Lee, and Kim]{cho2025long}
Seungju Cho, Hongsin Lee, and Changick Kim.
\newblock Long-tailed adversarial training with self-distillation.
\newblock \emph{arXiv preprint arXiv:2503.06461}, 2025.

\bibitem[Clark et~al.(2018)Clark, Cowhey, Etzioni, Khot, Sabharwal, Schoenick, and Tafjord]{clark2018think}
Peter Clark, Isaac Cowhey, Oren Etzioni, Tushar Khot, Ashish Sabharwal, Carissa Schoenick, and Oyvind Tafjord.
\newblock Think you have solved question answering? try arc, the ai2 reasoning challenge.
\newblock \emph{arXiv preprint arXiv:1803.05457}, 2018.

\bibitem[Csisz{\'a}r(1967)]{csiszar1967information}
Imre Csisz{\'a}r.
\newblock On information-type measure of difference of probability distributions and indirect observations.
\newblock \emph{Studia Sci. Math. Hungar.}, 2:\penalty0 299--318, 1967.

\bibitem[D{\'e}k{\'a}ny et~al.(2025)D{\'e}k{\'a}ny, Balauca, Staab, Dimitrov, and Vechev]{dekany2025mixat}
Csaba D{\'e}k{\'a}ny, Stefan Balauca, Robin Staab, Dimitar~I Dimitrov, and Martin Vechev.
\newblock Mix{AT}: Combining continuous and discrete adversarial training for {LLM}s.
\newblock \emph{Advances in Neural Information Processing Systems}, 2025.

\bibitem[Feng et~al.(2025)Feng, Fan, and Sun]{feng2025vulnerable}
Yuqi Feng, Jiahao Fan, and Yanan Sun.
\newblock Vulnerable data-aware adversarial training.
\newblock In \emph{The Thirty-ninth Annual Conference on Neural Information Processing Systems}, 2025.

\bibitem[Fu et~al.(2025)Fu, Ding, and Wang]{fu2025shortlength_ICDL2025_Orthogonal}
Shaopeng Fu, Liang Ding, and Di~Wang.
\newblock ''{Short-length}'' adversarial training helps {LLM}s defend ''{Long-length}'' jailbreak attacks: Theoretical and empirical evidence.
\newblock In \emph{ICLR 2025 Workshop on Foundation Models in the Wild}, 2025.
\newblock URL \url{https://openreview.net/forum?id=U74MXMriLw}.

\bibitem[Gopal et~al.(2025)Gopal, Yang, Zhang, Horton, and Chen]{gopal2025boosting}
Bhavna Gopal, Huanrui Yang, Jingyang Zhang, Mark Horton, and Yiran Chen.
\newblock Boosting adversarial robustness with clat: Criticality leveraged adversarial training.
\newblock In \emph{Forty-second International Conference on Machine Learning}, 2025.

\bibitem[Hendrycks et~al.(2021)Hendrycks, Burns, Basart, Zou, Mazeika, Song, and Steinhardt]{mmlu}
Dan Hendrycks, Collin Burns, Steven Basart, Andy Zou, Mantas Mazeika, Dawn Song, and Jacob Steinhardt.
\newblock Measuring massive multitask language understanding.
\newblock \emph{International Conference on Learning Representations}, 2021.

\bibitem[{Hugging Face H4}(2023)]{zephyr_7b}
{Hugging Face H4}.
\newblock Zephyr-7b-$\beta$.
\newblock \url{https://github.com/huggingface/alignment-handbook}, 2023.
\newblock Hugging Face model checkpoint.

\bibitem[Jiang \& Singh(2024)Jiang and Singh]{jiang2024ramp}
Enyi Jiang and Gagandeep Singh.
\newblock Ramp: Boosting adversarial robustness against multiple $ l\_p $ perturbations for universal robustness.
\newblock \emph{Advances in Neural Information Processing Systems}, 37:\penalty0 43759--43787, 2024.

\bibitem[Lee et~al.(2025)Lee, Cho, and Kim]{lee2025indirect}
Hongsin Lee, Seungju Cho, and Changick Kim.
\newblock Indirect gradient matching for adversarial robust distillation.
\newblock In \emph{13th International Conference on Learning Representations, ICLR 2025}, pp.\  49625--49646. International Conference on Learning Representations, ICLR, 2025.

\bibitem[Li \& Li(2025)Li and Li]{li2025adversarial}
Binghui Li and Yuanzhi Li.
\newblock Adversarial training can provably improve robustness: Theoretical analysis of feature learning process under structured data.
\newblock In \emph{The Thirteenth International Conference on Learning Representations}, 2025.

\bibitem[{Liese} \& {Vajda}(2006){Liese} and {Vajda}]{LieseVajda}
F.~{Liese} and I.~{Vajda}.
\newblock On divergences and informations in statistics and information theory.
\newblock \emph{IEEE Transactions on Information Theory}, 52\penalty0 (10):\penalty0 4394--4412, 2006.

\bibitem[Luenberger(1997)]{luenberger1997optimization}
D.G. Luenberger.
\newblock \emph{Optimization by Vector Space Methods}.
\newblock Professional Series. Wiley, 1997.
\newblock ISBN 9780471181170.
\newblock URL \url{https://books.google.com/books?id=M5n9DwAAQBAJ}.

\bibitem[Mazeika et~al.(2024)Mazeika, Phan, Yin, Zou, Wang, Mu, Sakhaee, Li, Basart, Li, et~al.]{mazeika2024harmbench}
Mantas Mazeika, Long Phan, Xuwang Yin, Andy Zou, Zifan Wang, Norman Mu, Elham Sakhaee, Nathaniel Li, Steven Basart, Bo~Li, et~al.
\newblock Harmbench: A standardized evaluation framework for automated red teaming and robust refusal.
\newblock \emph{arXiv preprint arXiv:2402.04249}, 2024.

\bibitem[{meta-llama}(2023)]{llama2_7b}
{meta-llama}.
\newblock Llama-2-7b-chat-hf.
\newblock \url{https://huggingface.co/meta-llama/Llama-2-7b-chat-hf}, 2023.
\newblock Hugging Face model checkpoint.

\bibitem[{meta-llama}(2024)]{llama3_8b}
{meta-llama}.
\newblock Meta-llama-3-8b-instruct.
\newblock \url{https://huggingface.co/meta-llama/Meta-Llama-3-8B-Instruct}, 2024.
\newblock Hugging Face model checkpoint.

\bibitem[{mistralai}(2025)]{mistral_7b}
{mistralai}.
\newblock Mistral-7b-instruct-v0.1.
\newblock \url{https://huggingface.co/mistralai/Mistral-7B-v0.1}, 2025.
\newblock Hugging Face model checkpoint.

\bibitem[{Nguyen} et~al.(2010){Nguyen}, {Wainwright}, and {Jordan}]{Nguyen_Full_2010}
X.~{Nguyen}, M.~J. {Wainwright}, and M.~I. {Jordan}.
\newblock Estimating divergence functionals and the likelihood ratio by convex risk minimization.
\newblock \emph{IEEE Transactions on Information Theory}, 56\penalty0 (11):\penalty0 5847--5861, 2010.

\bibitem[Ponstein(2004)]{ponstein2004approaches}
J.~Ponstein.
\newblock \emph{Approaches to the Theory of Optimization}.
\newblock Cambridge Tracts in Mathematics. Cambridge University Press, 2004.
\newblock ISBN 9780521604918.
\newblock URL \url{https://books.google.com/books?id=GaNB2B677wgC}.

\bibitem[Shaopeng et~al.(2026)Shaopeng, Fu, and Wang]{CAT_ICLR_26}
Shaopeng, Di~Fu, and Wang.
\newblock Understanding and improving continuous {LLM} adversarial training via in-context learning theory.
\newblock \emph{Internatial Conference on Learning Representations}, 2026.

\bibitem[Sheshadri et~al.(2025)Sheshadri, Ewart, Guo, Lynch, Wu, Hebbar, Sleight, Stickland, Perez, Hadfield-Menell, and Casper]{sheshadri2025latent_latent}
Abhay Sheshadri, Aidan Ewart, Phillip~Huang Guo, Aengus Lynch, Cindy Wu, Vivek Hebbar, Henry Sleight, Asa~Cooper Stickland, Ethan Perez, Dylan Hadfield-Menell, and Stephen Casper.
\newblock Latent adversarial training improves robustness to persistent harmful behaviors in {LLM}s.
\newblock \emph{Transactions on Machine Learning Research}, 2025.
\newblock ISSN 2835-8856.
\newblock URL \url{https://openreview.net/forum?id=6LxMeRlkWl}.

\bibitem[Wang et~al.(2025)Wang, Wu, Ji, Li, Ma, Wang, Li, Liu, Liu, and Rahmel]{wang2025selfdefend_Usenix_Orthogonal}
Xunguang Wang, Daoyuan Wu, Zhenlan Ji, Zongjie Li, Pingchuan Ma, Shuai Wang, Yingjiu Li, Yang Liu, Ning Liu, and Juergen Rahmel.
\newblock {SelfDefend}: {LLMs} can defend themselves against jailbreaking in a practical manner.
\newblock In \emph{34th USENIX Security Symposium (USENIX Security 25)}, pp.\  2441--2460, 2025.

\bibitem[Wang et~al.(2024)Wang, Zhang, and Arora]{wang2024benign}
Yunjuan Wang, Kaibo Zhang, and Raman Arora.
\newblock Benign overfitting in adversarial training of neural networks.
\newblock In \emph{Forty-first International Conference on Machine Learning}, 2024.

\bibitem[Xhonneux et~al.(2024)Xhonneux, Sordoni, G{\"u}nnemann, Gidel, and Schwinn]{xhonneux2024efficient_CAT}
Sophie Xhonneux, Alessandro Sordoni, Stephan G{\"u}nnemann, Gauthier Gidel, and Leo Schwinn.
\newblock Efficient adversarial training in {LLM}s with continuous attacks.
\newblock \emph{Advances in Neural Information Processing Systems}, 37:\penalty0 1502--1530, 2024.

\bibitem[Xiao et~al.(2024)Xiao, Zhang, Luo, and Ozdaglar]{xiao2024uniformly}
Jiancong Xiao, Jiawei Zhang, Zhi-Quan Luo, and Asuman Ozdaglar.
\newblock Uniformly stable algorithms for adversarial training and beyond.
\newblock \emph{arXiv preprint arXiv:2405.01817}, 2024.

\bibitem[Xie \& Huo(2024)Xie and Huo]{xie2024high}
Yiling Xie and Xiaoming Huo.
\newblock High-dimensional (group) adversarial training in linear regression.
\newblock \emph{Advances in Neural Information Processing Systems}, 37:\penalty0 31708--31735, 2024.

\bibitem[Zhang et~al.(2024{\natexlab{a}})Zhang, Liu, Zhou, Zhang, and Liu]{zhang2024improving}
Jiacheng Zhang, Feng Liu, Dawei Zhou, Jingfeng Zhang, and Tongliang Liu.
\newblock Improving accuracy-robustness trade-off via pixel reweighted adversarial training.
\newblock In \emph{Proceedings of the 41st International Conference on Machine Learning}, pp.\  59382--59402, 2024{\natexlab{a}}.

\bibitem[Zhang et~al.(2024{\natexlab{b}})Zhang, Wang, and Arora]{zhang2024stability}
Kaibo Zhang, Yunjuan Wang, and Raman Arora.
\newblock Stability and generalization of adversarial training for shallow neural networks with smooth activation.
\newblock \emph{Advances in Neural Information Processing Systems}, 37:\penalty0 16160--16193, 2024{\natexlab{b}}.

\end{thebibliography}
\bibliographystyle{iclr2024_conference}

\appendix

\section{DRO Dual Formulation}\label{app:DRO_duality_proof}

In this appendix we provide a derivation of the DRO reformulation result via convex duality; the argument is similar to the proof of Theorem 5.1 in \cite{Javid2012}, with the main difference being the inclusion of the $f$-divergence soft-constraint penalty term $\kappa D_f(Q\|P_n)$, which we require for numerical stability purposes; background on $f$-divergences can be found in Appendix \ref{app:f_div}.
\begin{theorem}\label{thm:duality}
Let $\mathcal{Z}$ be a measurable space  and $P_n$ be the empirical distribution of a collection of samples $z_i \in \mathcal{Z}$, $i=1,...,n$.  Further, suppose:
\begin{enumerate}
    \item We have $0\leq a<1<b\leq \infty$ and a convex $f:(a,b)\to\mathbb{R}$ that satisfies $f(1)=0$.
    \item  We have a measurable loss $\mathcal{L}:\mathcal{Z}\to\mathbb{R}$.
\end{enumerate}
Then for all $\epsilon>0$, $\kappa> 0$ we have the equality 
\begin{align}\label{eq:reformulation_final_app}
&\sup_{Q: D_f(Q\|P_n)\leq \epsilon}\{E_Q[\mathcal{L}]-\kappa D_f(Q\|P_n)\}\\
=&\inf_{\substack{\lambda\geq0,\\\rho\in\mathbb{R}} }\!\left\{\!\lambda \epsilon+\rho+(\lambda+\kappa)E_{P_n}\!\!\left[f^*\!\!\left(\frac{\mathcal{L}-\rho}{\lambda+\kappa}\right)\!\right]\right\}\notag\,,
\end{align}
where $f^*$ denotes the Legendre transform of $f$. In the KL case we have the further simplification
  \begin{align}\label{eq:reformulation_final_KL_app}
&\sup_{Q:\mathrm{KL}(Q\|P_n)\leq \epsilon} \{E_{Q}[\mathcal{L}]+\kappa \mathrm{KL}(Q\|P_n)\} \\
=&\inf_{\lambda\geq0}\!\left\{\lambda\epsilon+(\lambda+\kappa)\log E_{P_n}[\exp(\mathcal{L}/(\lambda+\kappa))]\right\}\,.\notag
    \end{align}
  Moreover, the objective function on the right-hand side of \eqref{eq:reformulation_final_app} is convex in $(\lambda,\rho)$ and the objective  on the right-hand side of \eqref{eq:reformulation_final_KL_app} is convex in $\lambda$.
\end{theorem}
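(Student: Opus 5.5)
Because $P_n$ is supported on the finitely many points $z_1,\dots,z_n$, I plan to reduce the left-hand side of \eqref{eq:reformulation_final_app} to a finite-dimensional concave program and then apply Lagrangian duality. Throughout I read the KL statement \eqref{eq:reformulation_final_KL_app} with the penalty $-\kappa\,\mathrm{KL}(Q\|P_n)$, as in \eqref{eq:reformulation_final_KL}.

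\emph{Step 1 (reduction).} Since $D_f(Q\|P_n)\le\epsilon<\infty$, I will use the conventions of Appendix~\ref{app:f_div} to restrict to $Q\ll P_n$. If $f'(\infty)=\infty$ this is automatic. If $f'(\infty)<\infty$, I would check separately that a singular part of $Q$ places mass off the support, where $\mathcal{L}$ is not evaluated, and so cannot help; I expect this to need some care. Writing $Q=\frac1n\sum_i w_i\delta_{z_i}$, the primal problem becomes
\[
\sup_{w}\Bigl\{\tfrac1n\textstyle\sum_i w_i\mathcal{L}(z_i)-\kappa\tfrac1n\sum_i f(w_i)\Bigr\}
\]
subject to $\tfrac1n\sum_i f(w_i)\le\epsilon$ and $\tfrac1n\sum_i w_i=1$, with $w_i$ in the closure of $(a,b)$ and $f$ extended by lower semicontinuity. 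This is the maximization of a concave function over a convex set.

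\emph{Step 2 (Lagrangian and strong duality).} I introduce $\lambda\ge0$ for the divergence constraint and $\rho\in\mathbb{R}$ for the normalization constraint. The Lagrangian dual function is
\[
\lambda\epsilon+\rho+\tfrac1n\textstyle\sum_i\sup_{w_i}\bigl\{w_i(\mathcal{L}(z_i)-\rho)-(\lambda+\kappa)f(w_i)\bigr\}.
\]
The supremum separates across $i$. Because $\lambda+\kappa>0$, each term equals $(\lambda+\kappa)f^*\bigl((\mathcal{L}(z_i)-\rho)/(\lambda+\kappa)\bigr)$, which gives the right-hand side of \eqref{eq:reformulation_final_app}.

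Weak duality is immediate. For strong duality I will invoke the Slater-type Lagrange duality theorem for convex programs with finitely many constraints (e.g.\ \cite{luenberger1997optimization}). The point $w=(1,\dots,1)$ lies in the interior of the domain because $a<1<b$. It satisfies the equality constraint, and it is strictly feasible for the inequality since $D_f=0<\epsilon$. The equality constraint is affine, and its feasible set meets the relative interior of the domain at this point.

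\emph{Main obstacle.} I expect the main difficulty to be verifying these hypotheses carefully. One must handle possibly infinite values of $f^*$ (when $b<\infty$, or $a>0$ and the argument is very negative), the endpoint values of $f$, and the case where the primal value is $+\infty$. I would first try the form of the duality theorem that allows extended-real objectives on a convex domain and requires only an interior Slater point.

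\emph{Step 3 (KL specialization).} For $f(t)=t\log t$ we have $f^*(y)=e^{y-1}$. For fixed $\lambda$, set $c=\lambda+\kappa$ and minimize $\rho+c\,E_{P_n}[e^{(\mathcal{L}-\rho)/c-1}]$ over $\rho$. This is strictly convex in $\rho$. The first-order condition gives
\[
E_{P_n}\bigl[e^{(\mathcal{L}-\rho)/c-1}\bigr]=1 ,\qquad\text{i.e.}\qquad \rho=c\log E_{P_n}[e^{\mathcal{L}/c}]-c .
\]
Substituting back, the last term equals $c$ and the objective becomes $c\log E_{P_n}[e^{\mathcal{L}/c}]$, which yields \eqref{eq:reformulation_final_KL_app}.

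\emph{Step 4 (convexity).} The map $(t,u)\mapsto t f^*(u/t)$ is the perspective of the convex function $f^*$, so it is jointly convex for $t>0$. Composing it with the affine map $(\lambda,\rho)\mapsto(\lambda+\kappa,\mathcal{L}(z_i)-\rho)$ and averaging over $i$ preserves convexity, as does adding the affine terms $\lambda\epsilon+\rho$. This gives convexity of the right-hand side of \eqref{eq:reformulation_final_app} in $(\lambda,\rho)$. Partial minimization of a jointly convex function over $\rho$ is convex in $\lambda$, so the KL objective is convex in $\lambda$. Alternatively, $c\mapsto c\log E_{P_n}[e^{\mathcal{L}/c}]$ is itself a perspective of the convex log-sum-exp function evaluated at $\mathcal{L}$.
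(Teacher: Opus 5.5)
Your proof is correct, but it is organized differently from the paper's.

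\textbf{The paper's route.} It works in the space of finite signed measures and dualizes only the ball constraint $D_f(Q\|P_n)\le\epsilon$, using the Slater point $Q=P_n$. It then evaluates the remaining inner problem $\sup_Q\{E_Q[\mathcal{L}/(\lambda+\kappa)]-D_f(Q\|P_n)\}$ over probability measures by citing the Ben-Tal--Teboulle Gibbs (optimized-certainty-equivalent) formula. That citation is where the extra variable $\nu$, hence $\rho=(\lambda+\kappa)\nu$, and the conjugate $f^*$ come from.

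\textbf{Your route.} You reduce to weights $w\in\mathbb{R}^n$ and dualize the divergence constraint and the mass constraint $\frac1n\sum_i w_i=1$ at the same time. So $\rho$ appears directly as the multiplier of normalization, and $f^*$ comes from the separable pointwise supremum. In effect you reprove the Ben-Tal--Teboulle formula for finitely supported $P_n$ inside a single finite-dimensional Slater argument: one convex inequality, one affine equality, and the interior point $w=(1,\dots,1)$.

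\textbf{What each buys.} Your version is more elementary and self-contained. It handles the endpoint and extended-value issues ($f$ defined only on $(a,b)$, $f^*$ possibly $+\infty$) with standard finite-dimensional convex analysis, and it makes the meaning of $\rho$ transparent. The paper's version is more modular: its second step does not depend on $P_n$ being atomic, although its first step also uses the finite support of $P_n$. The KL computation and the perspective/partial-minimization convexity argument are the same as the paper's. You are also right to read the KL display with $-\kappa\,\mathrm{KL}$, as in the main text; the $+$ there is a typo.

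Two small remarks.

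First, the obstacles you anticipate are mild. Every feasible $w$ has $w_i\ge 0$ and $\sum_i w_i=n$, and $D_f(Q\|P_n)\ge 0$ by Jensen. So the primal value lies between $E_{P_n}[\mathcal{L}]$ and $\max_i\mathcal{L}(z_i)$ and is always finite. Values $f^*=+\infty$ only make the dual function $+\infty$ at some $(\lambda,\rho)$ and do not affect the infimum.

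Second, drop the case split in Step 1. With the paper's definition, $D_f(Q\|P_n)=\infty$ whenever $Q\not\ll P_n$, so absolute continuity is automatic for every $f$. Your justification for the other case is actually wrong. $E_Q[\mathcal{L}]$ does evaluate $\mathcal{L}$ wherever the singular part of $Q$ sits. Under a convention that charges singular mass at rate $f'(\infty)<\infty$ (e.g.\ total variation), moving mass to a point where $\mathcal{L}$ is large makes the left-hand side strictly exceed the right-hand side, which only sees $\mathcal{L}(z_1),\dots,\mathcal{L}(z_n)$. So the identity genuinely depends on the paper's convention.
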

\begin{proof}
 We will employ  the Slater condition for convex duality, see, e.g., Theorem 3.11.2  in \cite{ponstein2004approaches} and Theorem 8.3.1 and Problem 8.7 in \cite{luenberger1997optimization}.  To that end, let $V$ be the vector space of finite signed measures on $\mathcal{Z}$ and define $W$ to be subset of probability measures satisfying $D_f(Q\|P_n)<\infty$. Next recall  that the  $f$-divergences are convex, which is a consequence of their  variational representation   \citep{Broniatowski,Nguyen_Full_2010,birrell2022f}.  This implies that $W$ is a  convex subset of $V$.  Define the convex functional $F:W\to \mathbb{R}$ by $F[Q]=E_Q[-\mathcal{L}]+\kappa D_f(Q\|P_n)$; we note that $Q\in W$ implies   $Q\ll P_n$, and so  $Q$ is supported on $\{z_i\}_{i=1}^n$, which implies $F[Q]$ is finite as claimed.  Convexity of $F$ follows from convexity of the $f$-divergence and linearity of the integral.  Finally, note that the convex inequality constraint $D_f(Q\|P_n)\leq \epsilon$ is strictly satisfied at $Q=P_n\in W$, i.e., $D_f(P_n\|P_n)=0<\epsilon$.   Thus we have confirmed the Salter condition and can conclude the strong duality result
\begin{align}\label{eq:strong_duality_app}
 &\inf_{Q:D_f(Q\|P_n)\leq \epsilon}\left\{E_Q[-\mathcal{L}]+\kappa D_f(Q\|P_n)\right\}\\
 =&\sup_{\lambda\geq 0 }\left\{-\lambda \epsilon+\inf_{Q:D_f(Q\|P_n)<\infty}\{E_Q[-\mathcal{L}]+(\lambda+\kappa) D_f(Q\|P_n)\}\right\}\,.\notag
 \end{align}
 Finally, for all $\lambda\geq 0$ the inner minimization over $Q$ can be evaluated by using the Gibbs variational formula for $f$-divergences, see Theorem 4.2 in \cite{BenTal2007} (note that the requirement that the minimum of $f$ equal zero is an inessential normalization condition; see, e.g, \cite{birrell2022f}). Specifically, after multiplying both sides by $-1$ this yields
\begin{align}\label{eq:Gibbs_app}
 &\sup_{Q:D_f(Q\|P_n)\leq \epsilon}\left\{E_Q[\mathcal{L}]-\kappa D_f(Q\|P_n)\right\}\\
 =&\inf_{\lambda\geq 0 }\left\{\lambda \epsilon+(\lambda+\kappa)\sup_{Q:D_f(Q\|P_n)<\infty}\{E_Q[\mathcal{L}/(\lambda+\kappa)]- D_f(Q\|P_n)\}\right\}\notag\\
=&\inf_{\lambda\geq 0 }\left\{\lambda \epsilon+(\lambda+\kappa)\inf_{\nu\in\mathbb{R}}\left\{\nu+E_{P_n}[f^*(\mathcal{L}/(\lambda+\kappa)-\nu)]\right\} \right\}\notag\,.
 \end{align}
Changing variables in the inner minimization from $\nu$ to $\rho=(\lambda+\kappa)\nu$, we arrive at \eqref{eq:reformulation_final_app}.
The KL result \eqref{eq:reformulation_final_KL_app} then follows from the fact that $f^*_{\mathrm{KL}}(y)=e^{y-1}$, which allows the optimization over  $\rho$  to be computed explicitly  via a straightforward calculus argument.

Convexity  in $(\lambda,\rho)$ of the objective on the right-hand side of \eqref{eq:reformulation_final_app}   follows from applying standard convex analysis results, including the convexity of the Legendre transform, $f^*$, and  convexity of the perspective of a convex function. Finally, convexity of the objective on the right-hand side of \eqref{eq:reformulation_final_KL_app} follows from the fact that minimizing a jointly convex function over one of its arguments results in  a convex  function in the remaining argument.
\end{proof}

\subsection{Background on $f$-Divergences}\label{app:f_div}
The $f$-divergences,   introduced by  \cite{ali1966general,csiszar1967information}, quantify the discrepancy between a pair of probability distributions, $Q$ and $P$, by using the likelihood ratio $dQ/dP$, i.e., by comparing the relative weights under $Q$ and $P$. More precisely, for any convex function $f$ on the real line with $f(1)=0$, the corresponding $f$-divergence is defined by
\begin{align}\label{eq:f_div_def}
    D_f(Q\|P)\coloneqq \begin{cases}
E_P[f(dQ/dP)] &\text{ if }Q\ll P\\
\infty &\text{ otherwise}
\end{cases}\,,
\end{align}
where $Q\ll P$ denotes absolute continuity, i.e., existence of the likelihood ratio. The most widely used   $f$-divergence is the KL divergence (i.e., relative entropy), which is defined in terms of $f_{\mathrm{KL}}(t)\coloneqq t\log(t)$.   For further discussions of the properties of  $f$-divergences  see, e.g., \cite{LieseVajda,birrell2022f}.

\section{Algorithm Pseudocode}
\label{app:pseudo}

\begin{algorithm}[H]
\caption{Distributionally Robust LLM Adversarial Training}
\label{alg:dro-llm-at}
\begin{algorithmic}[1]
\REQUIRE 
Adversarial data $\{(x_i,y_i,\hat y_i)\}_{i=1}^{n}$,
utility data $\mathcal{D}_u$,
target LLM $f_\theta$,
base adversarial method $\mathsf{BaseAT}$,
number of training iterations $N$,
minibatch size $B$,
number of inner attack iterations $M$,
\sys radius $\epsilon$,
soft regularization parameter $\kappa$,
dual variable initialization $\lambda_0$,
\sys mode $m_\lambda \in \{\mathrm{learnable},\mathrm{optimized}\}$,
learning rates $\eta_{\delta},\eta_{\lambda},\eta_{\theta}$.
\ENSURE Robustified LLM $f_\theta$.

\STATE Initialize $\lambda\leftarrow \lambda_0$.

\FOR{$n=1,\ldots,N$}
    \STATE Sample adversarial minibatch $\mathcal{B}_{n}$.
    \STATE Sample utility minibatch $\mathcal{U}_{n}$, if used.

    \FOR{$(x_i,y_i,\hat y_i)\in \mathcal{B}_{n}$}
        \STATE Initialize adversarial perturbation $\delta_i$.
        \FOR{$m=1,\ldots,M$}
            \STATE Update perturbation using the base attack:
            \[
            \delta_i
            \leftarrow
            \Pi_{\Delta}
            \left(
            \delta_i
            +
            \eta_{\delta}
            \nabla_{\delta}
            A_{attack}
            (x_i,y_i,\hat y_i,\delta_i,\theta)
            \right).
            \]
        \ENDFOR
        \STATE Set $\tilde x_i \leftarrow x_i+\delta_i$.
        \STATE Compute per-sample adversarial loss:
        \[
        L_i^{adv}
        \leftarrow
        \mathsf{Loss}_{\mathsf{BaseAT}}
        (f_\theta,\tilde x_i,y_i,\hat y_i).
        \]
    \ENDFOR

    \STATE Let $\mathbf{L}^{adv}=(L_1^{adv},\ldots,L_B^{adv})$.

    \STATE \textbf{Auxiliary update for the \sys dual variable $\lambda$:}
    \[
    \lambda
    \leftarrow
    \begin{cases}
    \Pi_{[\lambda_{\min},\lambda_{\max}]}
    \left(
    \lambda-\eta_{\lambda}
    \nabla_{\lambda}
    A_{DRO}^{KL}(\mathbf{L}^{adv},\lambda)
    \right),
    & m_\lambda=\mathrm{learnable}, \\[1.2em]
    \displaystyle
    \arg\min_{\lambda\ge 0}
    A_{DRO}^{KL}(\mathbf{L}^{adv},\lambda),
    & m_\lambda=\mathrm{optimized}.
    \end{cases}
    \]

    \STATE Compute the KL-DRO adversarial objective:
    \[
    A_{DRO}^{KL}
    (\mathbf{L}^{adv},\lambda)
    =
    \lambda\epsilon
    +
    (\lambda+\kappa)
    \log
    \left(
    \frac{1}{B}
    \sum_{i=1}^{B}
    \exp
    \left(
    \frac{L_i^{adv}}{\lambda+\kappa}
    \right)
    \right).
    \]

    \IF{utility data is used}
        \STATE Compute standard utility loss:
        \[
        \mathcal{L}_{u}
        =
        -\frac{1}{|\mathcal{U}_{n}|}
        \sum_{(x,y)\in\mathcal{U}_{n}}
        \log f_\theta(y|x).
        \]
        \STATE Set total loss:
        \[
        \mathcal{L}
        =
        A_{DRO}^{KL}
        (\mathbf{L}^{adv},\lambda)
        +
        \alpha_u\mathcal{L}_{u}.
        \]
    \ELSE
        \STATE Set total loss:
        \[
        \mathcal{L}
        =
        A_{DRO}^{KL}
        (\mathbf{L}^{adv},\lambda).
        \]
    \ENDIF

    \STATE Update model parameters:
    \[
    \theta
    \leftarrow
    \theta
    -
    \eta_\theta
    \nabla_\theta \mathcal{L}.
    \]
\ENDFOR

\RETURN Robustified LLM $f_\theta$.
\end{algorithmic}
\end{algorithm}

\section{Optimization over $\lambda$}\label{app:bisection_lambda}
In this appendix we detail the implementation of the $m_\lambda=\text{optimized}$ case of Algorithm \ref{alg:dro-llm-at}, line 14.  Here, the goal is to solve
\begin{align}\label{eq:lambda_opt_app}
\arg\min_{\lambda\ge 0}
    A_{DRO}^{KL}(\mathbf{L}^{adv},\lambda)\,,
\end{align}
where the objective is given by
\begin{align}
A_{DRO}^{KL}
    (\mathbf{L}^{adv},\lambda)
    =
    \lambda\epsilon
    +
    (\lambda+\kappa)
    \log
    \left(
    \frac{1}{B}
    \sum_{i=1}^{B}
    \exp
    \left(
    \frac{L_i^{adv}}{\lambda+\kappa}
    \right)
    \right)\,.
\end{align}
Recall that $\epsilon,\kappa>0$ and $L_i^{adv}$ are the values of the loss on the adversarial samples.

We start by noting that $A_{DRO}^{KL}
    (\mathbf{L}^{adv},\lambda)$ is smooth and convex in $\lambda\in[0,\infty)$ (see Theorem \ref{thm:duality}).  Now consider the following two cases.

{\bf Case 1:}   $\partial_\lambda|_{\lambda=0} A_{DRO}^{KL}
    (\mathbf{L}^{adv},\lambda)< 0$\\
First note that $\lim_{\lambda\to\infty} A_{DRO}^{KL}
    (\mathbf{L}^{adv},\lambda)=\infty$. This implies the existence of  $\lambda_r>0$ such that $\partial_\lambda|_{\lambda=\lambda_r} A_{DRO}^{KL}
    (\mathbf{L}^{adv},\lambda)>0$.    Combined with the condition that defines this case, we can therefore conclude existence of $\lambda_*\in[0,\lambda_r]$ such that  $\partial_\lambda|_{\lambda=\lambda_*} A_{DRO}^{KL}
    (\mathbf{L}^{adv},\lambda)=0$. Hence, by standard convex analysis theory, the critical point $\lambda_*$ is  a global minimizer.

{\bf Case 2:}   $\partial_\lambda|_{\lambda=0} A_{DRO}^{KL}
    (\mathbf{L}^{adv},\lambda)\geq 0$\\
Convexity implies that the derivative is non-decreasing, hence in this case we can conclude that $A_{DRO}^{KL}
    (\mathbf{L}^{adv},\lambda)$ is non-decreasing in $\lambda\in[0,\infty)$. Therefore the minimum is achieved at $\lambda_*=0$.

Based on these two cases, our approach for solving \eqref{eq:lambda_opt_app} is to first check the sign of $\partial_\lambda|_{\lambda=0} A_{DRO}^{KL}
    (\mathbf{L}^{adv},\lambda)$.  If it is non-negative then, according to Case 2, we return $\lambda_*=0$.  If it is negative then we use the bisection method to search for a critical point which, per Case 1, will be an optimizer.  Specifically, we initialize a variable $\lambda_r>0$ and increase it until we observe a sign change in the derivative, i.e., until $\partial_\lambda|_{\lambda=\lambda_r} A_{DRO}^{KL}
    (\mathbf{L}^{adv},\lambda)>0$ (as noted in Case 1, such a sign change is guaranteed to occur).  Subsequently we apply a standard bisection method solver (e.g., \texttt{optimize.bisect} in SciPy)
to return a solution, $\lambda_*$, to $\partial_\lambda  A_{DRO}^{KL}
    (\mathbf{L}^{adv},\lambda)=0$ in the interval $[0,\lambda_r]$.  Finally, we note the following
explicit  formula for the required derivative:
\begin{align}
   \partial_{\lambda}    A_{DRO}^{KL}(\mathbf{L}^{adv},\lambda)= \epsilon
    +
    \log
    \left(
    \frac{1}{B}
    \sum_{i=1}^{B}
    \exp
    \left(
    \frac{L_i^{adv}}{\lambda+\kappa}
    \right)
    \right)- (\lambda+\kappa)^{-1}
\frac{\frac{1}{B}
    \sum_{i=1}^{B}
  L_i^{adv}  \exp
    \left(
    \frac{L_i^{adv}}{\lambda+\kappa}
    \right) }{
    \frac{1}{B}
    \sum_{i=1}^{B}
    \exp
    \left(
    \frac{L_i^{adv}}{\lambda+\kappa}
    \right)
}\,.   
\end{align}
This formula illustrates  the importance of using a positive  $\kappa$, as it prevents a singularity at $\lambda=0$.

\section{Datasets, Models, and Licenses}
\label{app:assets}

We use publicly available datasets, model checkpoints, and baseline implementations. We credit the original creators through citations in the main text and summarize the sources and licenses of the main assets below. We do not redistribute third-party model weights, datasets, or baseline code as part of this submission.

\paragraph{Training and evaluation data.}
For adversarial training, CAT-\sys{} and CAPO-\sys{} use the adversarial prompt--response data from~\citet{xhonneux2024efficient_CAT}, while MixAT-\sys{} follows the adversarial and paraphrase-augmented data protocol of~\citet{dekany2025mixat}. When a utility dataset is used by the corresponding base method, we keep it unchanged. For robustness evaluation, we use HarmBench~\citep{mazeika2024harmbench} and evaluate on direct harmful requests, human-written jailbreaks, AutoDAN, and GCG attacks. For utility evaluation, we use MMLU~\citep{mmlu}, ARC-Easy and ARC-Challenge~\citep{clark2018think}, and the harmless-query benchmark from~\citet{xhonneux2024efficient_CAT}.

\paragraph{Models.}
We evaluate \sys{} on publicly available instruction-tuned LLMs. Table~\ref{tab:model_licenses} lists the model checkpoints used in this work. We use these models for research evaluation only and do not redistribute their weights.

\begin{table}[h]
\centering
\caption{Model checkpoints used in this work. Licenses and access terms should be verified against the corresponding official model cards before final submission.}
\label{tab:model_licenses}
\begin{tabular}{lll}
\toprule
Model & Source & License / terms \\
\midrule
Zephyr-7B-$\beta$~\citep{zephyr_7b} & Hugging Face H4 & MIT License \\
Mistral-7B-Instruct-v0.1~\citep{mistral_7b} & Mistral AI & Apache-2.0 License \\
Llama-2-7B-Chat~\citep{llama2_7b} & Meta & Llama 2 Community License \\
Llama-3-8B-Instruct~\citep{llama3_8b} & Meta & Meta Llama 3 Community License \\
\bottomrule
\end{tabular}
\end{table}

\paragraph{Baseline implementations.}
CAT-\sys{} and CAPO-\sys{} follow the CAT/CAPO implementation and training setup of~\citet{xhonneux2024efficient_CAT}, while MixAT-\sys{} follows the MixAT implementation and setup of~\citet{dekany2025mixat}. Where public code or checkpoints are available, we use them only for research reproduction and comparison, and cite the original papers and repositories. We do not redistribute third-party baseline code or checkpoints.

\section{Adversarial Training Hyperparameters}
\label{app:params}

\begin{table}[htbp]
\centering
\caption{
DRO-specific hyperparameters for \sys{} variants.
Subscripts $L$ and $O$ denote learnable and optimized treatments of the dual variable $\lambda$, respectively.
All non-DRO hyperparameters are inherited from the corresponding base method.
}
\label{tab:dro_hyperparameters}
\resizebox{\textwidth}{!}{
\begin{tabular}{llccccc}
\toprule
\textbf{Model} 
& \textbf{Variant} 
& \textbf{$\lambda$ treatment} 
& \textbf{$\lambda_0$ / solver setting} 
& \textbf{$\lambda$-LR}
& \textbf{KL radius $\epsilon$} 
& \textbf{Soft penalty $\kappa$} \\
\midrule
\multirow{5}{*}{Zephyr-7B} & CAT-\sys{}$_L$   & learnable & 5 & 0.0001 & 0.05 & 0.08 \\
 & CAT-\sys{}$_O$   & optimized & bisection & -- & 0.1 & 0.08\\
 & CAPO-\sys{}$_L$  & learnable & 5 & 0.0001 & 0.08 & 0.053 \\
 & CAPO-\sys{}$_O$  & optimized & bisection & -- & 0.1 & 0.07 \\
 & MixAT-\sys{}$_L$ & learnable & 5 & 0.001 & 0.1 & 0.1 \\
\midrule
\multirow{4}{*}{Mistral-7B} & CAT-\sys{}$_L$   & learnable & 5 & 0.0001 & 0.3 & 0.1 \\
& CAT-\sys{}$_O$   & optimized & bisection & -- & 0.05 & 0.1 \\
& CAPO-\sys{}$_L$  & learnable & 5 & 0.001 & 0.1 & 0.3 \\
& CAPO-\sys{}$_O$  & optimized & bisection & -- & 0.1 & 0.1 \\
\midrule
\multirow{4}{*}{Llama2-7B} & CAT-\sys{}$_L$   & learnable & 5 & 0.00001 & 0.8 & 0.5 \\
& CAT-\sys{}$_O$   & optimized & bisection & -- & 0.5 & 0.1 \\
& CAPO-\sys{}$_L$  & learnable & 5 & 0.001 & 0.2 & 0.1 \\
& CAPO-\sys{}$_O$  & optimized & bisection & -- & 0.1 & 0.1 \\
\midrule
\multirow{3}{*}{Llama3-8B} & CAT-\sys{}$_L$   & learnable & 5 & 0.00001 & 0.55 & 0.5 \\
& CAT-\sys{}$_O$   & optimized & bisection & -- & 0.5 & 0.1 \\
& MixAT-\sys{}$_L$ & learnable & 5 & 0.001 & 0.1 & 0.1 \\
\bottomrule
\end{tabular}
}
\end{table}

We tune only the hyperparameters introduced by \sys{}: the KL ambiguity radius $\epsilon$, the soft regularization coefficient $\kappa$, and the treatment of the dual variable $\lambda$. All other training hyperparameters, including the optimizer, learning-rate schedule, batch size, perturbation budget, number of inner attack steps, utility-loss weight, sequence length, and total training budget, are inherited from the corresponding base adversarial training method. Specifically, CAT-\sys{} and CAPO-\sys{} follow the CAT/CAPO configurations of~\cite{xhonneux2024efficient_CAT}, while MixAT-\sys{} follows the MixAT configuration of~\cite{dekany2025mixat}.

For each model and base method, we select DRO-specific hyperparameters using validation robustness--utility trade-off. Table~\ref{tab:dro_hyperparameters} reports the selected configurations used in the main experiments. In \sys{}$_L$, $\lambda$ is initialized at $\lambda_0$ and optimized jointly with the model using a separate dual learning rate.
In \sys{}$_O$, $\lambda$ is recomputed at each training step by solving the one-dimensional convex KL-DRO dual problem with a bisection solver.

% % \newpage
% \clearpage
% \input{checklist.tex}

\end{document}